\newenvironment{algorithmH}{%
  \par\begingroup\@twocolumnfalse\begin{algorithm}[H]
}{%
  \end{algorithm}\endgroup\par
}
\newtheorem{theorem}{Theorem}
\newtheorem{lemma}{Lemma}
\newtheorem{corollary}{Corollary}
\theoremstyle{definition}
\theoremstyle{remark}
\newtheorem{remark}{Remark}
\newtheorem{proposition}{Proposition}
\runningtitle{Numerical Fragility in Transformers}
\begin{document}
\twocolumn[
\aistatstitle{Numerical Fragility in Transformers:\\A Layer-wise Theory for Risk Estimation and Selective Stabilization}
\aistatsauthor{Jinwoo Baek}
\aistatsaddress{Department of Computer Science, Oregon State University}
]

\begin{abstract}
Low-precision execution can induce substantial forward discrepancies in
Transformers even for fixed weights and input, yet these discrepancies are
usually monitored only at the output and lack a layer-wise theoretical account.
We develop a first-order decomposition of output mismatch into layer-local
attention, LayerNorm, and residual-transport terms, and derive from it a
practical causal risk estimator and a budgeted controller, Bound-Guided
Selective Stabilization (BGSS). Controlled sweeps verify the predicted local
sign, monotonicity, and transport structure. On GPT-2, the transport-aware
combined predictor is positively correlated with FP32-reference mismatch in all
$18$ runs and improves over a no-transport ablation in $17/18$ runs.
Reference-patch attribution shows that the same score preserves useful layer
ordering information (mean Spearman $0.362$). In budget-matched mitigation,
BGSS outperforms random same-budget control in onset events
($10.67$ vs.~$11.67$), final mismatch
($1.243\times 10^{-3}$ vs.~$1.284\times 10^{-3}$), and worst-case mismatch
($3.14\times 10^{-3}$ vs.~$8.49\times 10^{-3}$), while matching a risk-only
same-budget controller on onset suppression and sharply reducing worst-case
mismatch ($3.14\times 10^{-3}$ vs.~$5.71\times 10^{-3}$). These results support
a theory-to-algorithm account of Transformer numerical fragility in which
finite-precision risk can be analyzed, estimated, localized, and selectively
stabilized.
\end{abstract}

\vspace{-1.5em}

\section{INTRODUCTION}
\label{sec:introduction}

Modern Transformer systems \citep{Vaswani2017attention} are routinely executed
in reduced precision to increase throughput and reduce memory cost
\citep{Micikevicius2018mixedprecision,Kalamkar2019bfloat16,Dettmers2022llmint8,Yao2022zeroquant,Frantar2022gptq,Xiao2022smoothquant,Lin2024awq,Bondarenko2023quantizable}.
Yet low-precision execution can produce substantial forward discrepancies even
for fixed weights and input, and these discrepancies are often treated as a
global implementation artifact rather than a structured internal phenomenon.
Output-level mismatch reveals that a run is numerically unstable, but not which
layers are responsible, which mechanisms dominate, or where a limited
stabilization budget should be spent.

This paper develops a theory-to-algorithm account of Transformer numerical
fragility. We start from a first-order layer-wise decomposition of
finite-precision forward error into attention-side sensitivity,
LayerNorm-driven instability with explicit $\varepsilon$-dependence, and
downstream residual transport. From this decomposition, we derive a practical
causal risk estimator and BGSS, a budgeted controller that selectively
increases LayerNorm $\varepsilon$ only where predicted risk is large and
LayerNorm-dominated.

The empirical results support the full pipeline: controlled sweeps verify the
predicted local sign, monotonicity, and transport structure; on GPT-2, the
transport-aware combined predictor is positively correlated with FP32-reference
mismatch in all $18$ runs and improves over a no-transport ablation in $17/18$
runs; reference-patch attribution preserves useful layer ordering information;
and budget-matched intervention shows that BGSS improves on random same-budget
control while substantially tightening worst-case behavior relative to a
risk-only same-budget controller.

The main contribution is a unified view of numerical fragility in Transformers:
\vspace{-1em}
\begin{enumerate}[leftmargin=1.2em]
  \item We develop a first-order layer-wise theory that combines
  attention-side sensitivity, residual relaxation, and LayerNorm
  $\varepsilon$-dependence into a unified forward-fragility decomposition.

  \vspace{-0.5em}

  \item We derive a practical causal estimator and a budgeted selective controller, BGSS, from that decomposition rather than treating monitoring and mitigation as unrelated engineering heuristics.

  \vspace{-0.5em}

  \item We validate the resulting pipeline empirically through controlled local checks, GPT-2 end-to-end predictor evaluation, exact-ish attribution fidelity, and budget-matched intervention experiments.
\end{enumerate}

\vspace{-1em}

Section~\ref{sec:problem_formulation} formalizes numerical fragility at the
output and layer level. Section~\ref{sec:theory} develops the decomposition,
Section~\ref{sec:algorithm} turns it into a causal estimator and BGSS, and
Section~\ref{sec:experiments} validates the resulting pipeline.

\noindent\textbf{Code availability.} The official code release for this paper is
available at\linebreak \url{https://github.com/JinwooBaek00/Numerical-Fragility-in-Transformers}.

\vspace{-0.5em}

\section{RELATED WORK}
\label{sec:related_work}

Low-precision execution is primarily motivated by efficiency. Mixed-precision
training, BF16 execution, and recent Transformer/LLM quantization methods such
as Q8BERT, LLM.int8, ZeroQuant, GPTQ, SmoothQuant, AWQ, and Quantizable
Transformers
\citep{Micikevicius2018mixedprecision,Kalamkar2019bfloat16,Zafrir2019q8bert,Dettmers2022llmint8,Yao2022zeroquant,Frantar2022gptq,Xiao2022smoothquant,Lin2024awq,Bondarenko2023quantizable}
show that large models can often run accurately under aggressive precision
constraints, but they typically evaluate numerical behavior through task
accuracy or output-level degradation rather than a layer-wise causal account of
where fragility originates.

Our work is also connected to classical numerical stability
\citep{Goldberg1991floatingpoint,Higham2002accuracy,IEEE754-2019}, to
architectural analyses of residual and normalization structure
\citep{He2016deepresidual,Ba2016layernorm,Xiong2020onlayernorm,Zhang2019rmsnorm},
and to activation-patching or causal-tracing style analyses that localize
internal responsibility by replacing hidden activations
\citep{Vig2020cma,Meng2022factassoc}. We draw on these perspectives, but our
goal is different: we start from output mismatch and derive a unified
layer-wise risk score together with a budgeted selective stabilizer.

\vspace{-0.5em}

\section{PROBLEM FORMULATION}
\label{sec:problem_formulation}

We formulate numerical fragility in Transformers as a layer-wise causal risk
estimation problem under finite-precision execution. Our focus is not
optimization, approximation, or generalization error, but the forward
discrepancy induced purely by finite-precision arithmetic inside a fixed model.

At optimization step $t$, let $X_{L,t}$ denote the exact final hidden state of a depth-$L$
Transformer on the current minibatch, and let $\widetilde{X}_{L,t}$ denote the corresponding
finite-precision output. We define the output-level numerical mismatch by
\begin{equation}
\label{eq:pf_mismatch}
m_t
\;:=\;
\frac{\|\widetilde{X}_{L,t}-X_{L,t}\|}{\|X_{L,t}\|}.
\end{equation}
A central premise is that numerical fragility is \emph{layer-local but globally
accumulated}. Accordingly, the problem is not merely to monitor a scalar
mismatch, but to construct a
layer-wise predictor
\[
\widehat{G}_t
=
(\widehat{G}_{1,t},\dots,\widehat{G}_{L,t}),
\qquad
\widehat{R}_t
:=
\sum_{\ell=1}^{L}\widehat{G}_{\ell,t},
\]
such that $\widehat{R}_t$ tracks $m_t$ while the coordinates
$\widehat{G}_{\ell,t}$ identify which layers contribute most strongly to
fragility. This layer-wise formulation supports both localization and control:
it distinguishes attention-side from LayerNorm-driven risk and enables
selective stabilization when only a few layers can be protected. We focus on
the minimal intervention class of LayerNorm stabilizers, asking for an online
rule that identifies layers whose predicted contribution is both large and
LayerNorm-driven and stabilizes only those layers.

\section{THEORY}
\label{sec:theory}

We develop a layer-wise first-order theory for numerical fragility in
Transformers and combine attention, residual, and LayerNorm effects into a
unified forward-stability theorem.

\subsection{Setup and Notation}
\label{subsec:setup}

We work under the standard floating-point model
\[
\mathrm{fl}(a \circ b) = (a \circ b)(1+\delta),
\qquad
|\delta| \le \epsilon_{\mathrm{mach}},
\]
for $\circ \in \{+,-,\times,\div\}$, where $\epsilon_{\mathrm{mach}}$ is the
machine precision of the active compute format. For structured kernels such as
GEMMs, reductions, softmax, and LayerNorm, first-order rounding effects are
absorbed into implementation-dependent constants. Unless stated otherwise,
$\|\cdot\|$ denotes the Frobenius norm and $\|\cdot\|_2$ the spectral norm.
For an invertible linear map $T$, write $\kappa(T):=\|T\|_2\|T^{-1}\|_2$. Let
$X_\ell$ and $\widetilde{X}_\ell$ denote exact and finite-precision hidden
states, let $E_\ell:=\widetilde{X}_\ell-X_\ell$, and for a residual block
$x\mapsto x+f(x)$ define $\rho_f:=\|J_f\|_2$. The
small-gain regime $\rho_f<1$ is used only in Theorem~\ref{thm:residual} and
Corollary~\ref{cor:depth}; the unified forward-error result later needs only
$\|I+J_f\|_2 \le 1+\rho_f$.

\subsection{Self-Attention Forward Error}
\label{subsec:attn}

Let $Q,K,V \in \mathbb{R}^{n \times d}$, where $n$ is the sequence length and $d$ is the head width.
Define
\[
S = \frac{1}{\sqrt d} QK^\top \in \mathbb{R}^{n\times n},
\qquad
P = \mathrm{softmax}(S),
\]
\[
A = PV \in \mathbb{R}^{n\times d}.
\]

Let $\mathcal{S}:\mathbb{R}^{n\times n}\to\mathbb{R}^{n\times n}$ denote the
row-wise softmax map. For a probability row $p \in \mathbb{R}^n$, define
\[
J(p) = \mathrm{Diag}(p) - pp^\top,
\qquad
0 \le \|J(p)\|_2 \le \tfrac12.
\]
Since $D\mathcal{S}_S$ is block diagonal with row blocks $J(P_{i:})$, let
\[
d_{\mathrm{smx}}
=
\|D\mathcal{S}_S\|_{F\to F}
=
\max_{1 \le i \le n}\|J(P_{i:})\|_2.
\]
We then define
\[
\kappa_{\mathrm{softmax}}
:=
\frac{\|S\|}{\|P\|}\,d_{\mathrm{smx}},
\qquad
\chi_{\mathrm{score}}
:=
\frac{\|Q\|\,\|K\|}{\|P\|\,\sqrt d}\,d_{\mathrm{smx}},
\]
and, whenever $\|A\|>0$,
\[
\kappa_{\mathrm{val}}
:=
\frac{\|P\|\,\|V\|_2}{\|A\|}.
\]

\begin{theorem}[Self-Attention Forward Error]
\label{thm:attn}
Under the floating-point model above, the finite-precision result
$\widetilde{A}$ of $A = PV$ satisfies
\begin{equation}
\label{eq:attn-bound}
\begin{split}
\|\widetilde{A}-A\|
\;\le\;&
\Big[
c_{\mathrm{smx}}
+
\kappa_{\mathrm{softmax}}
+ c_{\mathrm{gemm}}\chi_{\mathrm{score}}
\\
&\quad
+
c'_{\mathrm{gemm}}
\Big]
\epsilon_{\mathrm{mach}}\,\|P\|\,\|V\|_2
+
O(\epsilon_{\mathrm{mach}}^2).
\end{split}
\end{equation}
If additionally $\|A\|>0$, then
\[
\begin{aligned}
\frac{\|\widetilde{A}-A\|}{\|A\|}
&\le
\Big[
c_{\mathrm{smx}}
+
\kappa_{\mathrm{softmax}}
+
c_{\mathrm{gemm}}\chi_{\mathrm{score}}
+
c'_{\mathrm{gemm}}
\Big]
\\
&\times\;\epsilon_{\mathrm{mach}}\,\kappa_{\mathrm{val}}
+
O(\epsilon_{\mathrm{mach}}^2).
\end{aligned}
\]
\end{theorem}

The bound separates direct softmax sensitivity, score-formation error, and
value amplification; proof appears in App.~\ref{app:attn-proof}.

\subsection{Residual Relaxation}
\label{subsec:residual}

Residual connections do not remove local floating-point error, but they can
weaken its depth-wise accumulation. Let a residual block be $x \mapsto x + f(x)$
and write $J_f$ for the Jacobian of $f$.

\begin{theorem}[Residual Stabilization]
\label{thm:residual}
If $\|J_f\|_2 < 1$, then the linearized residual map $T = I + J_f$ is invertible and satisfies
\[
\kappa(T)
=
\|I+J_f\|_2\,\|(I+J_f)^{-1}\|_2
\le
\frac{1+\|J_f\|_2}{1-\|J_f\|_2}.
\]
Hence, under the small-gain condition, residual connections relax depth-wise compounding by
keeping the local linearized map well-conditioned.
\end{theorem}

\begin{corollary}[Depth-wise relaxation]
\label{cor:depth}
For a stack of residual blocks with Jacobians $\{J_{f_\ell}\}_{\ell=1}^L$ and
$\rho_\ell := \|J_{f_\ell}\|_2 < 1$,
\[
\kappa\!\Bigl(\prod_{\ell=1}^L (I+J_{f_\ell})\Bigr)
\le
\prod_{\ell=1}^L \frac{1+\rho_\ell}{1-\rho_\ell}.
\]
In the first-order predictor developed below, we use the relaxed downstream factor
\[
\prod_{k=\ell+1}^L (1+\rho_k)
\]
to capture this attenuation effect at the level of layer-wise accumulation.
\end{corollary}

The exact condition-number bound is
$\prod_{\ell=1}^L \frac{1+\rho_\ell}{1-\rho_\ell}$, whereas the factor
$\prod_{k=\ell+1}^L (1+\rho_k)$ used later is the corresponding first-order
forward-error transport factor. Proofs appear in
App.~\ref{app:residual-proof}.

\subsection{LayerNorm Forward Error and \texorpdfstring{$\varepsilon$}{epsilon}-Regime Structure}
\label{subsec:ln}

For a feature vector $x \in \mathbb{R}^{d_{\mathrm{model}}}$, define the
$\varepsilon$-dependent LayerNorm normalization path
\[
Z^{\mathrm{LN}}_\varepsilon(x)
:=
\mathrm{Diag}(\gamma)\frac{x-\mu(x)}{\sqrt{\sigma^2(x)+\varepsilon}},
\]
so that the full LayerNorm output is
\[
\mathrm{LN}(x)=Z^{\mathrm{LN}}_\varepsilon(x)+\beta.
\]
Here $\mu(x)$ and $\sigma^2(x)$ denote the mean and variance over the normalized
axis, and $\varepsilon > 0$ is the stabilizer. Because BGSS acts only through
$\varepsilon$, we isolate the normalization path
$Z^{\mathrm{LN}}_\varepsilon$ and absorb the final bias shift into the
$\varepsilon$-independent remainder term of the unified theorem. We also track
the scale-aware indicator
$\rho_{\mathrm{LN}}(\varepsilon):=(\sigma^2(x)/\varepsilon)\,
d_{\mathrm{model}}\,\epsilon_{\mathrm{mach}}$, which serves only as a practical
proxy for entry into the $\varepsilon$-dominated regime.

\begin{proposition}[First-order LayerNorm normalization-path forward error]
\label{prop:ln}
Under the floating-point model, there exists a
kernel- and dimension-dependent constant $a_{\mathrm{ln}}>0$ such that
\begin{equation}
\label{eq:ln-abs-bound}
\|\widetilde{Z}^{\mathrm{LN}}_\varepsilon(x)-Z^{\mathrm{LN}}_\varepsilon(x)\|
\le
\epsilon_{\mathrm{mach}}\,M_{\mathrm{LN}}(x,\varepsilon)
+ O(\epsilon_{\mathrm{mach}}^2),
\end{equation}
where
\begin{equation}
\label{eq:ln-abs-mag}
M_{\mathrm{LN}}(x,\varepsilon)
:=
\|\mathrm{Diag}(\gamma)\|_2
\frac{\varepsilon+2\sigma^2(x)}{(\sigma^2(x)+\varepsilon)^{3/2}}
\bigl(a_{\mathrm{ln}}\|x\|_2\bigr).
\end{equation}
If additionally $\|Z^{\mathrm{LN}}_\varepsilon(x)\|>0$, then
\begin{equation}
\label{eq:ln-bound}
\frac{\|\widetilde{Z}^{\mathrm{LN}}_\varepsilon(x)-Z^{\mathrm{LN}}_\varepsilon(x)\|}{\|Z^{\mathrm{LN}}_\varepsilon(x)\|}
\le
\epsilon_{\mathrm{mach}}\,C_{\mathrm{LN}}(x,\varepsilon)
+ O(\epsilon_{\mathrm{mach}}^2),
\end{equation}
where
\begin{equation}
\label{eq:ln-coeff}
C_{\mathrm{LN}}(x,\varepsilon)
:=
\frac{M_{\mathrm{LN}}(x,\varepsilon)}{\|Z^{\mathrm{LN}}_\varepsilon(x)\|}.
\end{equation}
Moreover, for fixed $x$, $\gamma$, and $a_{\mathrm{ln}}$, the
$\varepsilon$-dependent factor
\[
f_{\sigma^2(x)}(\varepsilon)
:=
\frac{\varepsilon+2\sigma^2(x)}{(\sigma^2(x)+\varepsilon)^{3/2}}
\]
is strictly decreasing in $\varepsilon>0$.
\end{proposition}

Proposition~\ref{prop:ln} gives the first-order $\varepsilon$-dependent
normalization-path magnitude together with a relative coefficient when
$\|Z^{\mathrm{LN}}_\varepsilon(x)\|>0$; proof appears in App.~\ref{app:ln-proof}.

\subsection{Unified Forward Stability}
\label{subsec:unified}

Let $A_\ell$ denote the exact attention-side coefficient
\begin{equation}
\label{eq:attn-contrib}
\begin{split}
A_\ell
:=\;&
\Bigl[
c_{\mathrm{smx}}
+
\kappa_{\mathrm{softmax},\ell}
+
c_{\mathrm{gemm}}\chi_{\mathrm{score},\ell}
+
c'_{\mathrm{gemm}}
\Bigr]
\\
&\times\;\|P_\ell\|\,\|V_\ell\|_2,
\end{split}
\end{equation}
let $M^{\mathrm{LN}}_\ell := M_{\mathrm{LN}}(x_\ell,\varepsilon_\ell)$ be the
exact LayerNorm normalization-path magnitude from Proposition~\ref{prop:ln}.
Let $\mathcal{R}_\ell$ be the finite set of remaining non-attention kernels in
layer $\ell$ outside the $\varepsilon$-dependent normalization path, and for
each $r \in \mathcal{R}_\ell$ assume
\begin{equation}
\label{eq:eff-module}
\|\Delta_{\ell,r}\|
\le
\epsilon_{\mathrm{mach}}\,b_{\ell,r}
+ O(\epsilon_{\mathrm{mach}}^2).
\end{equation}
Let $M^{\mathrm{eff}}_\ell := \sum_{r\in\mathcal{R}_\ell} b_{\ell,r}$ denote the
aggregate remainder magnitude, which upper-bounds the summed remainder
perturbation by Lemma~\ref{lem:eff-closure}. The local fragility magnitude is
\begin{equation}
\label{eq:local-coeff}
M_\ell
:=
M^{\mathrm{eff}}_\ell
+
A_\ell
+
M^{\mathrm{LN}}_\ell.
\end{equation}

\begin{theorem}[Unified forward stability]
\label{thm:unified}
Assume the absolute bound of Theorem~\ref{thm:attn} for every attention site and
Proposition~\ref{prop:ln} for every LayerNorm site contributing to $M_\ell$.
For each residual block $x\mapsto x+f_\ell(x)$, let
\[
\rho_\ell := \|J_{f_\ell}(X_{\ell-1})\|_2.
\]
Assume further that \eqref{eq:eff-module} holds for every $r\in\mathcal{R}_\ell$
and every layer $\ell$, and that $\|X_L\|>0$.
\begin{equation}
\label{eq:unified}
\begin{split}
\frac{\|\widetilde{X}_L - X_L\|}{\|X_L\|}
\;\le\;&
\epsilon_{\mathrm{mach}}
\sum_{\ell=1}^{L}
\frac{M_\ell}{\|X_L\|}
\prod_{k=\ell+1}^{L}(1+\rho_k)
\\
&+
O(\epsilon_{\mathrm{mach}}^2).
\end{split}
\end{equation}
\end{theorem}

Theorem~\ref{thm:unified} decomposes output-level mismatch into layer-wise local
magnitudes transported by downstream residual factors, without any small-gain
assumption since $\|I+J_{f_k}\|_2\le 1+\rho_k$. Proof appears in
App.~\ref{app:unified-proof}. Any benign inter-layer norm ratios needed to
express site-local absolute magnitudes relative to the final output scale are
absorbed into the layer-dependent first-order constants defining $M_\ell$.

\subsection{From the Unified Bound to a Selective Controller}
\label{subsec:bgss_theory}

Theorem~\ref{thm:unified} naturally induces a monitored layer-wise risk score.
At step $t$ with $\|X_{L,t}\|>0$, let
\begin{equation}
\label{eq:bgss-attn}
\begin{split}
A_{\ell,t}
:=\;&
\Bigl[
c_{\mathrm{smx}}
+
\kappa_{\mathrm{softmax},\ell,t}
+
c_{\mathrm{gemm}}\chi_{\mathrm{score},\ell,t}
+
c'_{\mathrm{gemm}}
\Bigr]
\\
&\times\;\|P_{\ell,t}\|\,\|V_{\ell,t}\|_2,
\end{split}
\end{equation}
let $M^{\mathrm{LN}}_{\ell,t}:=M_{\mathrm{LN}}(x_{\ell,t},\varepsilon_{\ell,t})$
be the exact LayerNorm normalization-path magnitude at site $(\ell,t)$ from
Proposition~\ref{prop:ln}, and let
$M^{\mathrm{eff}}_{\ell,t}:=\sum_{r\in\mathcal{R}_\ell} b_{\ell,t,r}$ be the
aggregate monitored-step remainder magnitude for the non-attention,
non-normalization-path kernels in $\mathcal{R}_\ell$. Set
\begin{equation}
\label{eq:bgss-local}
M_{\ell,t}
:=
M^{\mathrm{eff}}_{\ell,t}
+
A_{\ell,t}
+
M^{\mathrm{LN}}_{\ell,t},
\end{equation}
where $\rho_{k,t} := \|J_{f_k}(X_{k-1,t})\|_2$ denotes the downstream residual
Jacobian norm at step $t$. Define
\begin{equation}
\label{eq:bgss-layer}
G_{\ell,t}
:=
\frac{M_{\ell,t}}{\|X_{L,t}\|}
\prod_{k=\ell+1}^{L}(1+\rho_{k,t}).
\end{equation}
Then Theorem~\ref{thm:unified} gives the first-order predictor
\begin{equation}
\label{eq:bgss-predictor}
\frac{\|\widetilde{X}_{L,t}-X_{L,t}\|}{\|X_{L,t}\|}
\;\lesssim\;
\epsilon_{\mathrm{mach}}
\sum_{\ell=1}^{L} G_{\ell,t}.
\end{equation}

Here $G_{\ell,t}$ is the predicted layer-wise contribution to end-to-end
fragility. To quantify whether this contribution is primarily LayerNorm-driven,
define
\begin{equation}
\label{eq:bgss-phi}
\phi_{\ell,t}
:=
\frac{M^{\mathrm{LN}}_{\ell,t}}{M_{\ell,t}},
\end{equation}
with the convention $\phi_{\ell,t}:=0$ when $M_{\ell,t}=0$. Large $G_{\ell,t}$
indicates high risk, while large $\phi_{\ell,t}$ indicates LayerNorm dominance.

\begin{proposition}[Monotone reduction of the frozen-scale LayerNorm contribution]
\label{prop:bgss}
Fix a monitored layer $\ell$ and step $t$. Let $v_{\ell,t}:=\sigma^2(x_{\ell,t})$ and let
$a_{\ell,t}>0$ denote the positive constant inherited from
Proposition~\ref{prop:ln} at the monitored site $(\ell,t)$. Hold $x_{\ell,t}$,
$v_{\ell,t}$, $a_{\ell,t}$, and $\gamma_\ell$,
$M^{\mathrm{eff}}_{\ell,t}$, $A_{\ell,t}$, $\|X_{L,t}\|$, and all
non-normalization-path coefficients fixed. Define
\begin{equation}
\label{eq:bgss-cln}
M_{\mathrm{LN},\ell,t}^{\mathrm{frz}}(\varepsilon)
:=
\|\mathrm{Diag}(\gamma_\ell)\|_2
\frac{\varepsilon+2v_{\ell,t}}{(v_{\ell,t}+\varepsilon)^{3/2}}
\bigl(a_{\ell,t}\,\|x_{\ell,t}\|_2\bigr).
\end{equation}
Then $M_{\mathrm{LN},\ell,t}^{\mathrm{frz}}$ is nonincreasing in $\varepsilon>0$, and it is
strictly decreasing whenever
$\|\mathrm{Diag}(\gamma_\ell)\|_2\,\|x_{\ell,t}\|_2>0$. Moreover,
\begin{equation}
\label{eq:bgss-derivative}
\begin{split}
\frac{d}{d\varepsilon}M_{\mathrm{LN},\ell,t}^{\mathrm{frz}}(\varepsilon)
:=\;&
-\frac{\|\mathrm{Diag}(\gamma_\ell)\|_2}{2}
\bigl(a_{\ell,t}\,\|x_{\ell,t}\|_2\bigr)
\\
&\times\;\frac{\varepsilon+4v_{\ell,t}}{(v_{\ell,t}+\varepsilon)^{5/2}}
\\
\le\;&0.
\end{split}
\end{equation}
Consequently, the frozen-scale layer-wise contribution
\begin{equation}
\label{eq:bgss-g-eps}
G_{\ell,t}^{\mathrm{frz}}(\varepsilon)
:=
\frac{
M^{\mathrm{eff}}_{\ell,t}
+
A_{\ell,t}
+
M_{\mathrm{LN},\ell,t}^{\mathrm{frz}}(\varepsilon)
}{\|X_{L,t}\|}
\prod_{k=\ell+1}^{L}(1+\rho_{k,t})
\end{equation}
is nonincreasing in $\varepsilon$, and for any $\varepsilon^{\prime}\ge\varepsilon$
\begin{equation}
\label{eq:bgss-reduction}
\begin{split}
G_{\ell,t}^{\mathrm{frz}}(\varepsilon)
- G_{\ell,t}^{\mathrm{frz}}(\varepsilon^{\prime})
=\;&
\frac{1}{\|X_{L,t}\|}
\\
&\times\;\Bigl(
M_{\mathrm{LN},\ell,t}^{\mathrm{frz}}(\varepsilon)
-
M_{\mathrm{LN},\ell,t}^{\mathrm{frz}}(\varepsilon^{\prime})
\Bigr)
\\
&\times\;\prod_{k=\ell+1}^{L}(1+\rho_{k,t}).
\end{split}
\end{equation}
\end{proposition}
Proposition~\ref{prop:bgss} shows that, in the frozen-scale first-order model,
increasing $\varepsilon$ decreases the LayerNorm part of the predicted
layer-wise contribution, motivating selective intervention on layers with both
large $G_{\ell,t}$ and large $\phi_{\ell,t}$. In the algorithmic
implementation, common positive kernel-dependent multipliers are absorbed into
controller thresholds. Proof appears in App.~\ref{app:bgss-proof}.

\section{ALGORITHM}
\label{sec:algorithm}

We instantiate the theory of Section~\ref{sec:theory} as
\emph{Bound-Guided Selective Stabilization} (BGSS), an online controller that
monitors layer-wise numerical fragility and selectively increases LayerNorm
stabilizers. BGSS leaves model weights, optimizer states, and architectural
blocks unchanged; it acts only through LayerNorm $\varepsilon$ values.
Throughout this section, hats denote quantities measured on the monitored
finite-precision forward pass at step $t$. BGSS is theory-guided rather than
exactly calibrated: the practical statistics below need not equal the exact
coefficients from Section~\ref{subsec:bgss_theory}, but they are required to be
nonnegative, causal, and to preserve the exact score structure
\[
\begin{aligned}
&\text{local magnitude}
\\
&\;+\;\text{downstream residual transport}
\\
&\;+\;\text{monotone }\varepsilon\text{-dependence}
\\
&\qquad\text{of the LayerNorm term.}
\end{aligned}
\]
Unknown hardware/runtime-dependent scale factors are therefore absorbed into the
controller thresholds.

\subsection{Practical Layer-Wise Risk Estimation}
\label{subsec:bgss_estimation}

At a monitored step $t$ with $\|\widehat{X}_{L,t}\|>0$, the exact coefficients in
Section~\ref{subsec:bgss_theory} are either unavailable or unnecessarily costly
to compute online. BGSS therefore replaces them with causal surrogates computed
from the monitored pass. For each layer $\ell$, let $\widehat{S}_{\ell,t}$,
$\widehat{Q}_{\ell,t}$, $\widehat{K}_{\ell,t}$, $\widehat{P}_{\ell,t}$,
$\widehat{V}_{\ell,t}$, and $\widehat{Z}^{\mathrm{LN}}_{\ell,t}$ denote the
observed score, query, key, attention-probability, value, and LayerNorm
normalization-path tensors. Let
$\widehat{M}^{\mathrm{eff}}_{\ell,t}\ge 0$ denote any causal surrogate of the
remaining non-attention, non-normalization-path local magnitude, and let
$\widehat{\rho}_{k,t}\ge 0$ denote any causal surrogate of the downstream
residual Jacobian norm $\rho_{k,t}$.

We first estimate the local softmax differential by
\[
\widehat{d}_{\mathrm{smx},\ell,t}
:=
\widehat{\|D\mathcal{S}\|}_{\ell,t},
\]
\[
\widehat{\kappa}_{\mathrm{softmax},\ell,t}
:=
\frac{\|\widehat{S}_{\ell,t}\|}{\|\widehat{P}_{\ell,t}\|}\,
\widehat{d}_{\mathrm{smx},\ell,t},
\]
\[
\widehat{\chi}_{\mathrm{score},\ell,t}
:=
\frac{\|\widehat{Q}_{\ell,t}\|\,\|\widehat{K}_{\ell,t}\|}
{\|\widehat{P}_{\ell,t}\|\,\sqrt d}\,
\widehat{d}_{\mathrm{smx},\ell,t},
\]
\[
\widehat{A}_{\ell,t}
:=
\bigl(
\widehat{\kappa}_{\mathrm{softmax},\ell,t}
+
\widehat{\chi}_{\mathrm{score},\ell,t}
\bigr)
\|\widehat{P}_{\ell,t}\|\,\|\widehat{V}_{\ell,t}\|_2,
\]
and use the monotone LayerNorm surrogate
\[
\widehat{C}_{\mathrm{LN},\ell,t}
:=
\frac{\varepsilon_{\ell,t}+2\widehat{\sigma}_{\ell,t}^{\,2}}
{(\widehat{\sigma}_{\ell,t}^{\,2}+\varepsilon_{\ell,t})^{3/2}}.
\]
This preserves the exact monotone dependence on $\varepsilon$ while omitting shared
positive kernel-dependent multipliers. Define the surrogate LayerNorm magnitude
\[
\widehat{M}^{\mathrm{LN}}_{\ell,t}
:=
\|\widehat{Z}^{\mathrm{LN}}_{\ell,t}\|\,\widehat{C}_{\mathrm{LN},\ell,t},
\]
\[
\widehat{M}_{\ell,t}
:=
\widehat{M}^{\mathrm{eff}}_{\ell,t}
+
\widehat{A}_{\ell,t}
+
\widehat{M}^{\mathrm{LN}}_{\ell,t},
\]
the practical layer-wise risk score
\begin{equation}
\label{eq:alg-risk}
\widehat{G}_{\ell,t}
:=
\frac{\widehat{M}_{\ell,t}}{\|\widehat{X}_{L,t}\|}
\prod_{k=\ell+1}^{L}(1+\widehat{\rho}_{k,t}),
\end{equation}
and the LayerNorm dominance ratio
\begin{equation}
\label{eq:alg-phi}
\widehat{\phi}_{\ell,t}
:=
\frac{\widehat{M}^{\mathrm{LN}}_{\ell,t}}{\widehat{M}_{\ell,t}}.
\end{equation}
We adopt the convention $\widehat{\phi}_{\ell,t}:=0$ when
$\widehat{M}_{\ell,t}=0$.

To detect entry into the $\varepsilon$-dominated regime, BGSS also tracks
\begin{equation}
\label{eq:alg-rho-ln}
\widehat{\rho}_{\mathrm{LN},\ell,t}
:=
\frac{\widehat{\sigma}_{\ell,t}^{\,2}}{\varepsilon_{\ell,t}}\,
d_{\mathrm{model}}\,
\epsilon_{\mathrm{mach}},
\end{equation}
so that $\widehat{G}_{\ell,t}$ preserves the exact
\[
\begin{aligned}
&\text{local magnitude}\;/\;\text{final-output scale}
\\
&\times\;\text{downstream transport}
\end{aligned}
\]
structure, $\widehat{\phi}_{\ell,t}$ measures how much of that surrogate risk is
LayerNorm-driven, and $\widehat{\rho}_{\mathrm{LN},\ell,t}$ tests whether the layer is in the
$\varepsilon$-dominated regime. We do not claim exact calibration of
$\widehat{G}_{\ell,t}$ across kernels or runtimes; BGSS uses these quantities
for ranking, thresholding, and local action selection. Estimation details are
given in App.~\ref{app:estimation}.

\paragraph{Selection rule.}
BGSS forms the eligible set
\[
\mathcal{E}_t
:=
\left\{
\ell:
\begin{array}{l}
\widehat{G}_{\ell,t}\ge\tau_G,\\
\widehat{\phi}_{\ell,t}\ge\tau_\phi,\\
\widehat{\rho}_{\mathrm{LN},\ell,t}<1
\end{array}
\right\},
\]
where $\tau_G$ is a risk threshold and $\tau_\phi$ is a LayerNorm-dominance
threshold. A layer is therefore eligible only when its predicted contribution is
large, LayerNorm-dominant, and currently in the $\varepsilon$-dominated regime.
Layers under active cooldown are removed from $\mathcal{E}_t$, and BGSS keeps
the top-$B$ remaining layers ranked by $\widehat{G}_{\ell,t}$.

\paragraph{Update rule.}
For each selected layer $\ell$, BGSS applies the monotone update
\begin{equation}
\label{eq:alg-eps-cand}
\varepsilon^{\mathrm{cand}}_{\ell,t}
:=
\mathrm{clip}\!\left(
\frac{\widehat{\sigma}_{\ell,t}^{\,2}\,d_{\mathrm{model}}\,\epsilon_{\mathrm{mach}}}{\rho_*},
\ \varepsilon_{\min},\ \varepsilon_{\max}
\right),
\end{equation}
\begin{equation}
\label{eq:alg-eps-update}
\varepsilon_{\ell,t+1}
:=
\max\!\bigl\{
\varepsilon_{\ell,t},\,
\varepsilon^{\mathrm{cand}}_{\ell,t}
\bigr\},
\end{equation}
where $\rho_* \in (0,1)$ is the target post-update LayerNorm ratio.
All non-selected layers keep
\[
\varepsilon_{\ell,t+1} = \varepsilon_{\ell,t}.
\]
The candidate in \eqref{eq:alg-eps-cand} is the smallest value inside
$[\varepsilon_{\min},\varepsilon_{\max}]$ that would enforce
$\widehat{\rho}_{\mathrm{LN},\ell,t}\le \rho_*$ on the monitored statistics
whenever feasible; the outer max in \eqref{eq:alg-eps-update} is then the
smallest monotone bounded update. If the target value exceeds
$\varepsilon_{\max}$, the rule saturates at $\varepsilon_{\max}$ and therefore
achieves the best feasible reduction of the monitored ratio subject to the box
constraint.

Under frozen monitored statistics, the post-update monitored ratio satisfies
\[
\begin{aligned}
\frac{\widehat{\sigma}_{\ell,t}^{\,2}\,d_{\mathrm{model}}\,\epsilon_{\mathrm{mach}}}
{\varepsilon_{\ell,t+1}}
&\le
\rho_*
\\
\text{whenever }\quad
\frac{\widehat{\sigma}_{\ell,t}^{\,2}\,d_{\mathrm{model}}\,\epsilon_{\mathrm{mach}}}{\rho_*}
&\le
\varepsilon_{\max},
\end{aligned}
\]
and the frozen surrogate LayerNorm magnitude
\[
\widehat{M}^{\mathrm{LN,frz}}_{\ell,t}(\varepsilon)
:=
\|\widehat{Z}^{\mathrm{LN}}_{\ell,t}\|
\frac{\varepsilon+2\widehat{\sigma}_{\ell,t}^{\,2}}
{(\widehat{\sigma}_{\ell,t}^{\,2}+\varepsilon)^{3/2}}
\]
is nonincreasing in $\varepsilon$. Consequently, the corresponding frozen
surrogate layer score is also nonincreasing. This is the algorithmic analogue of
Proposition~\ref{prop:bgss}.

\begin{algorithm}[t]
\caption{Bound-Guided Selective Stabilization (BGSS)}
\label{alg:bgss}
\footnotesize
\setlength{\algomargin}{0.8em}
\DontPrintSemicolon
\KwIn{monitor interval $m$; thresholds $(\tau_G,\tau_\phi)$; target ratio $\rho_*$; budget $B$; cooldown $c$; bounds $(\varepsilon_{\min},\varepsilon_{\max})$}
Initialize cooldown counters to zero for all layers\;
\For{optimization step $t=1,2,\dots$}{
  perform the usual forward/backward/update step\;
  \If{$t \bmod m = 0$}{
    decrement all positive cooldown counters by one\;
    estimate $\widehat{G}_{\ell,t}$, $\widehat{\phi}_{\ell,t}$, and $\widehat{\rho}_{\mathrm{LN},\ell,t}$ for all layers $\ell$ using the activations observed at step $t$\;
    form $\mathcal{E}_t=\{\ell:\widehat{G}_{\ell,t}\ge\tau_G,\ \widehat{\phi}_{\ell,t}\ge\tau_\phi,\ \widehat{\rho}_{\mathrm{LN},\ell,t}<1\}$\;
    remove layers with positive cooldown counters from $\mathcal{E}_t$\;
    keep the top-$B$ layers in $\mathcal{E}_t$ ranked by $\widehat{G}_{\ell,t}$\;
    \ForEach{selected layer $\ell$}{
      $\varepsilon^{\mathrm{cand}}_{\ell,t}
      \gets
      \mathrm{clip}\!\left(
      \widehat{\sigma}_{\ell,t}^{\,2} d_{\mathrm{model}} \epsilon_{\mathrm{mach}}/\rho_*,
      \varepsilon_{\min},\varepsilon_{\max}
      \right)$\;
      $\varepsilon^{\mathrm{new}}_{\ell,t+1}
      \gets
      \max\{\varepsilon_{\ell,t},\varepsilon^{\mathrm{cand}}_{\ell,t}\}$\;
      \If{$\varepsilon^{\mathrm{new}}_{\ell,t+1}>\varepsilon_{\ell,t}$}{
        set $\varepsilon_{\ell,t+1}\gets\varepsilon^{\mathrm{new}}_{\ell,t+1}$\;
        set the cooldown counter of layer $\ell$ to $c$\;
      }
    }
  }
}
\end{algorithm}

\paragraph{Properties and theoretical grounding.}
BGSS is causal, selective, monotone, and mechanism-aware. Causality follows from
using only statistics observed at step $t$ and applying updates from step $t+1$
onward. Selectivity follows from the top-$B$ budget on eligible layers. Monotonicity
follows from \eqref{eq:alg-eps-update}. Mechanism-awareness follows from requiring
both large overall risk ($\widehat{G}_{\ell,t}$) and large LayerNorm dominance
($\widehat{\phi}_{\ell,t}$). Finally, Theorem~\ref{thm:unified} provides the
layer-wise decomposition underlying $\widehat{G}_{\ell,t}$, while
Proposition~\ref{prop:bgss} and the frozen-statistics argument above justify
\eqref{eq:alg-eps-update} as the smallest bounded monotone local
risk-reducing action toward the target ratio $\rho_*$. Thus, BGSS is a causal
surrogate-based algorithmic realization of the unified first-order fragility
model under budgeted intervention constraints.

\section{EXPERIMENTS}
\label{sec:experiments}

\paragraph{Evaluation protocol.}
Our empirical evaluation mirrors the theory-to-algorithm structure of the
paper. Throughout, FP32 execution serves as the numerical reference. E1 uses
synthetic controlled sweeps to test the local statements behind
Theorem~\ref{thm:attn}, Theorem~\ref{thm:residual},
Corollary~\ref{cor:depth}, and Proposition~\ref{prop:ln}. E2 and E3 use the
HuggingFace \texttt{gpt2} checkpoint with BF16/FP16 monitored passes on
WikiText-103 validation, sequence lengths $\{128,512,1024\}$, $3$ seeds, and
$96$ monitored windows per run, for $18$ completed runs. E3 reuses these E2
runs and evaluates layer attribution on the top-$8$ highest-mismatch windows
per run. E5 uses stable FP32-master / FP16-shadow training on WikiText-2 train
with sequence length $256$, $256$ monitored steps, $3$ seeds, and a shared
budget of $24$ actions for the budget-matched controllers. Unless otherwise
noted, aggregate statistics are means over completed runs.
Table~\ref{tab:gpt2_summary} collects the main GPT-2 evidence for the
end-to-end predictor, attribution, and mitigation experiments in one place.
Where variability is relevant, we report standard deviations across runs or
across seeds.

\begin{table}[t]
\centering
\caption{Main GPT-2 evidence summary. Means are over completed runs; when
reported, uncertainties are standard deviations across runs/seeds.}
\label{tab:gpt2_summary}
\scriptsize
\setlength{\tabcolsep}{3pt}
\renewcommand{\arraystretch}{0.98}
\begin{tabular}{@{}p{0.10\columnwidth}p{0.84\columnwidth}@{}}
\toprule
Exp. & Main evidence \\
\midrule
E2 & The transport-aware predictor achieves Pearson $0.370\pm0.119$ versus
\texttt{no\_transport} $0.206$, improves correlation in $17/18$ runs, and
improves top-$k$ retrieval in $12/18$ runs. \\
E3 & Reference-patch attribution yields mean Spearman
$0.362\pm0.156$, pairwise accuracy $0.643\pm0.061$, top-$3$ overlap
$0.505$, and top-$5$ overlap $0.622$. \\
E5 & Relative to the random same-budget controller, BGSS reduces onset events
($10.67\pm1.15$ vs.~$11.67$), final mismatch
($1.243\times 10^{-3}\pm 2.87\times 10^{-5}$ vs.~$1.284\times 10^{-3}$),
and worst-case mismatch
($3.14\times 10^{-3}\pm 1.01\times 10^{-3}$ vs.~$8.49\times 10^{-3}$); versus
the risk-only same-budget controller, BGSS matches mean onset events
($10.67$ vs.~$10.67$) while reducing worst-case mismatch
($3.14\times 10^{-3}$ vs.~$5.71\times 10^{-3}$), at a slight cost in mean
final mismatch ($1.243\times 10^{-3}$ vs.~$1.221\times 10^{-3}$). \\
\bottomrule
\end{tabular}
\end{table}

\subsection{E1: Controlled local validation}
\label{subsec:e1}

We begin with controlled numerical checks that isolate the three local
mechanisms in the theory. The attention sweep varies score margin and value
scale in a $2\times 2$ toy attention map; the LayerNorm sweep varies
$\varepsilon$ over $10$ logarithmically spaced values; and the residual sweep
varies the local gain $\rho$ over $8$ values while composing depth-$6$
transport. These are not language-model runs; they are direct mechanism probes.

Figure~\ref{fig:e1_attention} shows that the attention proxy tracks the
measured attention-output perturbation almost perfectly
(Pearson $=0.999999$, Spearman $=1.0$), as predicted by
Theorem~\ref{thm:attn}. In the LayerNorm sweep, both the measured
normalization-path change and the causal proxy decrease monotonically with
$\varepsilon$, matching Proposition~\ref{prop:ln}. In the residual sweep, the
measured downstream amplification remains below the predicted transport bound at
all tested $\rho$ values, matching Theorem~\ref{thm:residual} and
Corollary~\ref{cor:depth}. The corresponding LayerNorm and residual curves are
reported in Appendix~\ref{app:exp-details}. Overall, E1 verifies that the
local signs, monotonicity, and transport directionality assumed by the unified
decomposition are directly visible in controlled numerical measurements.

\begin{figure}[H]
\centering
\includegraphics[width=\linewidth]{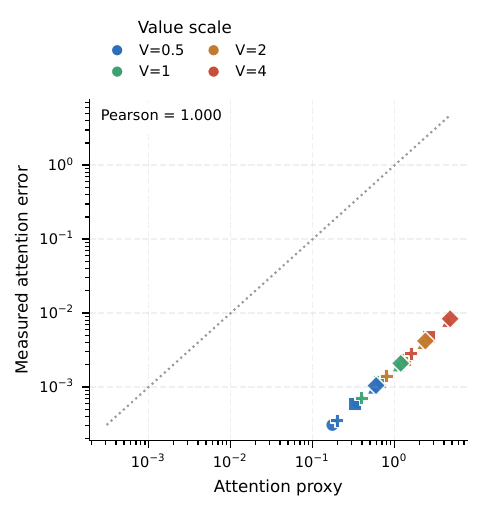}
\caption{E1 attention-side controlled validation. Each point is one controlled
attention configuration from the margin/value-scale sweep. The theory proxy and
the measured attention-output perturbation align almost perfectly, confirming
the sign and relative scaling predicted by Theorem~\ref{thm:attn}.}
\label{fig:e1_attention}
\end{figure}

\subsection{E2: End-to-end predictor validation on GPT-2}
\label{subsec:e2}

We next test whether the practical transport-aware predictor tracks the final
FP32-reference mismatch on real GPT-2 evaluation windows. For each monitored
window, we compute the combined predictor
$\widehat{R}_t=\sum_{\ell}\widehat{G}_{\ell,t}$ and compare it to the
\texttt{no\_transport} ablation, which removes the downstream residual
transport factors from the same local decomposition. This is the primary E2
comparison: the unified theory specifically adds transport on top of local
magnitudes, whereas the single-mechanism signals are diagnostic probes rather
than the main baseline.

Across the $18$ GPT-2 runs, the combined predictor is positively correlated
with the final mismatch in all runs. Averaged over the full sweep, it achieves
mean Pearson/Spearman correlations of $0.370\pm0.119$ and $0.351\pm0.060$, compared with
$0.206/0.212$ for \texttt{no\_transport}. Adding transport improves correlation
in $17/18$ runs and improves top-$k$ retrieval of high-mismatch windows in
$12/18$ runs. Figure~\ref{fig:e2_transport} summarizes this comparison: almost
all runs lie above the diagonal, showing that causal downstream transport is
consistently useful in practice. This is the main empirical validation of the
transport-aware unified estimator from Section~\ref{sec:algorithm}. A run-wise
$\Delta$-Pearson view and a binned risk-trend plot appear in
Appendix~\ref{app:exp-details}.

\begin{figure}[H]
\centering
\includegraphics[width=\linewidth]{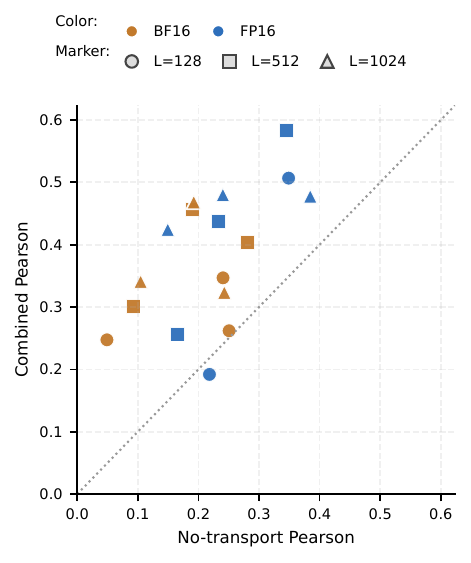}
\caption{E2 end-to-end predictor validation on GPT-2. Each point is one run
over a precision/sequence-length/seed combination. The transport-aware
combined predictor almost always improves on the no-transport ablation,
supporting the role of downstream residual transport in the unified risk
estimator.}
\label{fig:e2_transport}
\end{figure}

\subsection{E3: Attribution and localization fidelity}
\label{subsec:e3}

E2 validates the scalar predictor. E3 asks whether the induced layerwise
ranking is faithful to reference-patch layer importance. Starting from the E2
runs, we select the top-$8$ highest-mismatch windows from each run and rerun
the same manual GPT-2 forward path while replacing exactly one low-precision
block with its FP32 reference counterpart. The resulting reduction in final
mismatch is the reference-patch effect for that layer.

The first check is alignment: the recomputed baseline mismatch agrees with the
source E2 mismatch to within an average absolute gap of only
$9.24\times 10^{-7}\pm 3.47\times 10^{-7}$, so the attribution comparison is not confounded by a
different forward path. The second check is ranking fidelity. Across the
$18$ runs, we obtain mean Spearman correlation $0.362\pm0.156$, mean pairwise ordering
accuracy $0.643\pm0.061$, mean top-$3$ overlap $0.505\pm0.108$, and mean top-$5$ overlap
$0.622\pm0.084$. Figure~\ref{fig:e3_heatmap} shows the aggregate proxy-rank versus
exact-rank heatmap. The mass is concentrated near the diagonal, indicating that
the practical layerwise proxy is not a perfect oracle but does preserve useful
localization structure for the most influential layers. Run-level fidelity
statistics are reported in Appendix~\ref{app:exp-details}.

\begin{figure}[H]
\centering
\includegraphics[width=\linewidth]{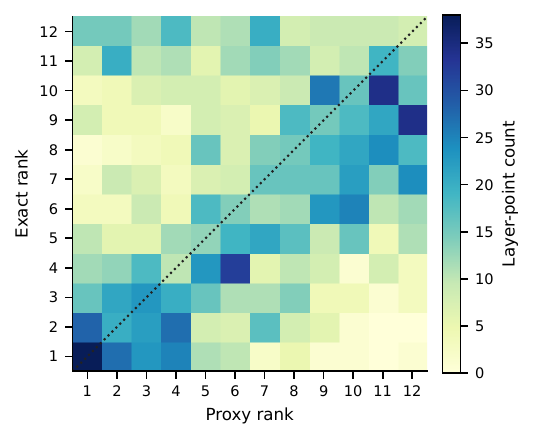}
\caption{E3 attribution fidelity. The heatmap counts proxy-rank versus exact-ish
rank over all evaluated layer-step pairs. Mass concentrated near the diagonal
indicates that the practical layerwise proxy preserves meaningful exact layer
ordering information.}
\label{fig:e3_heatmap}
\end{figure}

\subsection{E5: Budgeted mitigation utility}
\label{subsec:e5}

Finally, we test whether the theory-guided score is useful for intervention
rather than explanation alone. We compare five policies during stable
FP32-master / FP16-shadow training: no intervention, a static global
LayerNorm-$\varepsilon$ increase, a random same-budget controller, a risk-only
same-budget controller that ranks layers by overall risk without the
LayerNorm-dominance gate, and BGSS. The key comparisons are budget-matched:
BGSS, the random controller, and the risk-only controller all use $24$ actions
and the same average protected layer-steps ($2742$), so performance gaps
reflect policy quality rather than intervention volume.

Figure~\ref{fig:e5_robustness} shows the main robustness view. Relative to the
random same-budget controller, BGSS reduces mean mismatch-onset events from
$11.67$ to $10.67\pm1.15$, lowers mean final mismatch from
$1.284\times 10^{-3}$ to $1.243\times 10^{-3}\pm2.87\times 10^{-5}$, and sharply
reduces mean max mismatch from $8.49\times 10^{-3}$ to
$3.14\times 10^{-3}\pm1.01\times 10^{-3}$. The stronger mechanism-aware test is
against the risk-only same-budget controller: both policies achieve the same
mean onset-event count ($10.67$), and the risk-only controller attains a
slightly lower mean final mismatch ($1.221\times 10^{-3}$ versus
$1.243\times 10^{-3}$), but BGSS reduces mean max mismatch from
$5.71\times 10^{-3}$ to $3.14\times 10^{-3}$ and wins the worst-case comparison
in all three seeds. Static global stabilization uses a larger protection budget
($3072$ protected layer-steps) yet still produces higher final mismatch and
higher worst-case mismatch than BGSS. The conclusion is therefore not that
BGSS minimizes every average metric, but that the theory-guided policy is the
strongest \emph{robust} budget-matched controller among the tested
alternatives. A complementary tradeoff view is given in
Appendix~\ref{app:exp-details}.

\begin{figure}[H]
\centering
\includegraphics[width=\linewidth]{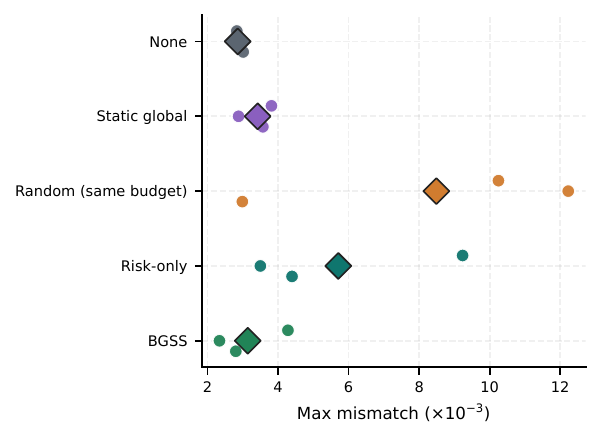}
\caption{E5 budgeted mitigation, viewed through worst-case robustness. Small
points show individual seeds and diamonds show policy means. Under the same
budget as the random and risk-only controllers, BGSS substantially reduces max
mismatch and yields the strongest robust budget-matched policy.}
\label{fig:e5_robustness}
\end{figure}

\paragraph{Summary.}
Taken together, E1 validates the local surrogates, E2 validates the end-to-end
transport-aware predictor on GPT-2, E3 validates the resulting layerwise
localization, and E5 shows that the same score can drive a useful budgeted
controller. The experiments therefore support the full pipeline of the paper:
from first-order layerwise theory, to a practical online estimator, to a
selective mitigation algorithm.

\section{LIMITATIONS}
\label{sec:limitations}

Our analysis is intentionally first-order. The theory isolates the dominant
finite-precision mechanisms arising from attention, LayerNorm, and residual
transport, and E1--E3 validate precisely this regime. The practical estimator,
however, still depends on causal surrogates for quantities such as softmax
sensitivity, downstream transport, and effective remainder magnitude. These
surrogates preserve the structure of the unified bound and work well
empirically, but they are not online certificates of exact mismatch. Likewise,
the implementation-dependent constants in the theory are treated as fixed for a
given hardware/runtime pair rather than derived analytically for every kernel.

The empirical scope is also deliberate. Our large-model experiments use GPT-2
and LayerNorm-based stabilization, so we do not claim immediate quantitative
transfer to architectures with substantially different normalization, routing,
or attention implementations. In addition, BGSS studies a narrow but controlled
intervention class---selective LayerNorm $\varepsilon$ updates under an
explicit budget---rather than the full space of numerical mitigation methods.
Accordingly, the paper supports the claim that layer-wise fragility structure
can drive useful selective stabilization, but it does not claim that BGSS is a
globally optimal controller across architectures, kernels, and runtimes.

\paragraph{Future work.}
Several extensions are especially natural. On the theory side, an important
next step is to derive sharper implementation-aware constants and to extend the
analysis to RMSNorm, grouped-query or multi-query attention, mixture-of-experts
routing, and more aggressive quantization regimes. On the algorithmic side, it
would be valuable to enlarge the intervention space beyond LayerNorm
$\varepsilon$ bumps to include selective recomputation, mixed-precision
routing, or budgeted precision promotion for specific kernels. On the empirical
side, scaling the evaluation from GPT-2 to larger contemporary language models
would test how the same layer-wise decomposition behaves under newer
architectural choices and runtime stacks.

\section{CONCLUSION}
\label{sec:conclusion}

We presented a layer-wise theory of numerical fragility in Transformers that
connects attention-side sensitivity, LayerNorm instability, and residual
transport through a unified first-order forward-error decomposition. From this
decomposition, we derived a practical causal risk estimator and a budgeted
selective stabilization rule, BGSS. The experiments support the full pipeline:
controlled sweeps verify the local mechanism statements, GPT-2 evaluation shows
that the transport-aware combined predictor tracks FP32-reference mismatch,
reference-patch attribution shows that the same score carries useful
layer-localization information, and budget-matched intervention experiments
show that it can support selective mitigation. Taken together, these results
argue that finite-precision instability in Transformers is not merely a global
numerical artifact, but a structured layer-wise phenomenon that can be
analyzed, estimated, localized, and selectively acted upon using causal
information available during execution.


\newpage

\bibliographystyle{apalike}
\bibliography{references_camera_ready}

\newpage

\section*{Checklist}

\begin{enumerate}

  \item For all models and algorithms presented, check if you include:
  \begin{enumerate}
    \item A clear description of the mathematical setting, assumptions, algorithm, and/or model. [Yes]
    \item An analysis of the properties and complexity (time, space, sample size) of any algorithm. [Yes]
    \item (Optional) Anonymized source code, with specification of all dependencies, including external libraries. [Yes]
  \end{enumerate}

  \item For any theoretical claim, check if you include:
  \begin{enumerate}
    \item Statements of the full set of assumptions of all theoretical results. [Yes]
    \item Complete proofs of all theoretical results. [Yes]
    \item Clear explanations of any assumptions. [Yes]     
  \end{enumerate}

  \item For all figures and tables that present empirical results, check if you include:
  \begin{enumerate}
    \item The code, data, and instructions needed to reproduce the main experimental results (either in the supplemental material or as a URL). [Yes]
    \item All the training details (e.g., data splits, hyperparameters, how they were chosen). [Yes]
    \item A clear definition of the specific measure or statistics and error bars (e.g., with respect to the random seed after running experiments multiple times). [Yes]
    \item A description of the computing infrastructure used. (e.g., type of GPUs, internal cluster, or cloud provider). [Yes]
  \end{enumerate}

  \item If you are using existing assets (e.g., code, data, models) or curating/releasing new assets, check if you include:
  \begin{enumerate}
    \item Citations of the creator If your work uses existing assets. [Yes]
    \item The license information of the assets, if applicable. [Yes]
    \item New assets either in the supplemental material or as a URL, if applicable. [Yes]
    \item Information about consent from data providers or curators. [Not Applicable]
    \item Discussion of sensible content if applicable, e.g., personally identifiable information or offensive content. [Not Applicable]
  \end{enumerate}

  \item If you used crowdsourcing or conducted research with human subjects, check if you include:
  \begin{enumerate}
    \item The full text of instructions given to participants and screenshots. [Not Applicable]
    \item Descriptions of potential participant risks, with links to Institutional Review Board (IRB) approvals if applicable. [Not Applicable]
    \item The estimated hourly wage paid to participants and the total amount spent on participant compensation. [Not Applicable]
  \end{enumerate}

\end{enumerate}

\newpage

\clearpage
\appendix
\onecolumn

\section{PROOFS AND ADDITIONAL DERIVATIONS}
\label{app:proofs}

\subsection{Floating-Point Kernel Model and Basic Lemmas}
\label{app:kernels-lemmas}

We adopt the scalar FP model $\mathrm{fl}(a\circ b)=(a\circ b)(1+\delta)$ with $|\delta|\le\epsilon_{\mathrm{mach}}$ and $\circ\in\{+,-,\times,\div\}$. For structured kernels (reductions, GEMMs, softmax, layer normalization), we use standard first-order bounds that collect rounding into implementation-dependent constants (independent of the specific model instance). We record the basic lemmas used in the proofs.

\begin{lemma}[GEMM first-order score bound]
\label{lem:score-bound}
Let $\widetilde{S}=\mathrm{fl}(QK^\top/\sqrt d)$. Then
\[
\|\widetilde{S}-S\|
\le
\epsilon_{\mathrm{mach}}
\left(
\|S\|
+
c_{\mathrm{gemm}}\frac{\|Q\|\,\|K\|}{\sqrt d}
\right)
+
O(\epsilon_{\mathrm{mach}}^2).
\]
\end{lemma}

\begin{proof}
Write $\widetilde{S}=\tfrac{1}{\sqrt d}\mathrm{fl}(QK^\top)=\tfrac{1}{\sqrt d}(QK^\top+E)$ with $\|E\|\le c_{\mathrm{gemm}}\epsilon_{\mathrm{mach}}\|Q\|\,\|K\|$. The subsequent scalar division contributes a multiplicative factor $(1+\delta)$ with $|\delta|\le\epsilon_{\mathrm{mach}}$, so
\[
\widetilde{S}
=
\frac{QK^\top+E}{\sqrt d}(1+\delta)
=
S + S\delta + \frac{E}{\sqrt d} + O(\epsilon_{\mathrm{mach}}^2).
\]
Taking norms and using triangle inequality gives the claim.
\end{proof}

\begin{lemma}[GEMM output forward bound]
\label{lem:gemm-out}
Let $\widetilde{A}=\mathrm{fl}(MV)$ for compatible matrices $M,V$. Then
\[
\widetilde{A}=MV+E_{MV},
\]
with
\[
\|E_{MV}\|
\le
c'_{\mathrm{gemm}}\epsilon_{\mathrm{mach}}\|M\|\,\|V\|_2
+ O(\epsilon_{\mathrm{mach}}^2).
\]
\end{lemma}

\begin{proof}
Write the rows of $M$ as $m_i^\top$. Standard first-order backward-error
analysis for each row-times-matrix product gives row perturbations
$\Delta m_i$ such that the computed $i$th output row satisfies
\[
\widetilde a_i^\top
=
(m_i+\Delta m_i)^\top V,
\qquad
\|\Delta m_i\|_2
\le
\gamma_{\mathrm{gemm}}\|m_i\|_2,
\]
with
\[
\gamma_{\mathrm{gemm}}
=
c'_{\mathrm{gemm}}\epsilon_{\mathrm{mach}}
+ O(\epsilon_{\mathrm{mach}}^2),
\]
where $c'_{\mathrm{gemm}}$ depends only on the GEMM kernel/runtime. Stacking the
rows defines a perturbation matrix $\Delta M$ satisfying
\[
\widetilde A
=
(M+\Delta M)V
=
MV + E_{MV},
\qquad
E_{MV}:=\Delta M\,V.
\]
Moreover,
\[
\|\Delta M\|_F^2
=
\sum_i \|\Delta m_i\|_2^2
\le
\gamma_{\mathrm{gemm}}^2
\sum_i \|m_i\|_2^2
=
\gamma_{\mathrm{gemm}}^2\|M\|_F^2.
\]
Therefore,
\[
\|E_{MV}\|_F
\le
\|\Delta M\|_F\|V\|_2
\le
\gamma_{\mathrm{gemm}}\|M\|_F\|V\|_2
=
c'_{\mathrm{gemm}}\epsilon_{\mathrm{mach}}\|M\|\,\|V\|_2
+ O(\epsilon_{\mathrm{mach}}^2),
\]
which is the claimed bound.
\end{proof}

\begin{lemma}[Softmax kernel first-order forward bound]
\label{lem:softmax-kernel}
Let $\mathcal{S}$ denote the row-wise softmax map, let
\[
\widehat{P}:=\mathcal{S}(S+\Delta S),
\]
and let $\widetilde{P}=\mathrm{fl}_{\mathrm{smx}}(S+\Delta S)$ be the
finite-precision softmax output produced from the perturbed scores
$S+\Delta S$, where $\|\Delta S\|=O(\epsilon_{\mathrm{mach}})$. Then
\[
\widetilde{P}
=
\widehat{P}+E_{\mathrm{smx}},
\]
with
\[
\|E_{\mathrm{smx}}\|
\le
c_{\mathrm{smx}}\epsilon_{\mathrm{mach}}\|\widehat{P}\|
+ O(\epsilon_{\mathrm{mach}}^2).
\]
Equivalently,
\[
\|E_{\mathrm{smx}}\|
\le
c_{\mathrm{smx}}\epsilon_{\mathrm{mach}}\|P\|
+ O(\epsilon_{\mathrm{mach}}^2),
\]
where $P=\mathcal{S}(S)$.
\end{lemma}

\begin{proof}
The first bound is the standard first-order forward model for the
implementation of the row-wise softmax kernel, with
$c_{\mathrm{smx}}$ depending only on the runtime/kernel. Since
$\mathcal{S}$ is smooth and $\|\Delta S\|=O(\epsilon_{\mathrm{mach}})$,
\[
\widehat{P}
=
\mathcal{S}(S+\Delta S)
=
P + D\mathcal{S}_S[\Delta S] + O(\|\Delta S\|^2)
=
P + O(\epsilon_{\mathrm{mach}}).
\]
Hence
\[
\|\widehat{P}\|
\le
\|P\| + O(\epsilon_{\mathrm{mach}}),
\]
and multiplying by $\epsilon_{\mathrm{mach}}$ gives
\[
\epsilon_{\mathrm{mach}}\|\widehat{P}\|
=
\epsilon_{\mathrm{mach}}\|P\|
+ O(\epsilon_{\mathrm{mach}}^2),
\]
which yields the second form.
\end{proof}

\begin{lemma}[Row-wise softmax Jacobian norm]
\label{lem:softmax-jac}
For $p=\mathrm{softmax}(s)\in\mathbb{R}^n$, $J(p)=\mathrm{Diag}(p)-pp^\top$ satisfies $0\le \|J(p)\|_2\le\tfrac12$, with the maximum approached near two-way ties.
\end{lemma}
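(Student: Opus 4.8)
The plan is to use the symmetry and positive-semidefiniteness of $J(p)$ to turn the operator-norm bound into a sharp variance inequality. First I would record that $J(p)=\mathrm{Diag}(p)-pp^\top$ is symmetric, so $\|J(p)\|_2=\max_i|\lambda_i|$, and that for any $v\in\mathbb{R}^n$ a one-line expansion gives $v^\top J(p)v=\sum_i p_i v_i^2-\bigl(\sum_i p_i v_i\bigr)^2=\mathrm{Var}_p(v)$, the variance of the variable taking value $v_i$ with probability $p_i$. Since this is $\ge 0$, we get $J(p)\succeq 0$, hence $\|J(p)\|_2=\lambda_{\max}(J(p))=\sup_{\|v\|_2=1}\mathrm{Var}_p(v)\ge 0$; also $J(p)\mathbf 1=p-p(p^\top\mathbf 1)=0$, so without loss of generality $v\in\mathbf 1^\perp$, which does not change the operator norm.

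For the upper bound I would invoke Popoviciu's inequality: a random variable supported in $[m,M]$ has variance at most $(M-m)^2/4$. With $m=\min_{i:p_i>0}v_i=v_{j^\star}$ and $M=\max_{i:p_i>0}v_i=v_{i^\star}$ (the constant-$v$ case is trivial, and otherwise $i^\star\neq j^\star$), this gives $\mathrm{Var}_p(v)\le\tfrac14(M-m)^2\le\tfrac14\cdot 2(M^2+m^2)=\tfrac12(v_{i^\star}^2+v_{j^\star}^2)\le\tfrac12\|v\|_2^2$, using $-2Mm\le M^2+m^2$ and that the two extreme coordinates are distinct. Taking the supremum over unit $v$ yields $\|J(p)\|_2\le\tfrac12$.

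For sharpness, I would take $p$ with mass on exactly two coordinates, $p=(q,1-q,0,\dots,0)$, which is the limit of $\mathrm{softmax}(s)$ when two scores dominate and tie. All rows and columns of $J(p)$ with $p_i=0$ vanish, and the surviving $2\times 2$ block equals $q(1-q)\bigl(\begin{smallmatrix}1&-1\\-1&1\end{smallmatrix}\bigr)$, with eigenvalues $0$ and $2q(1-q)$; maximizing $2q(1-q)$ over $q\in[0,1]$ gives $\tfrac12$ at $q=\tfrac12$. Since a genuine softmax output is strictly positive, the bound $\tfrac12$ is approached (two coordinates carrying nearly equal, dominant mass) rather than attained, while peaked or near-uniform rows give $2q(1-q)$ bounded away from $\tfrac12$ — exactly the ``near two-way ties'' statement.

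The main obstacle is pinning down the \emph{sharp} constant $\tfrac12$: the easy bound $J(p)\preceq\mathrm{Diag}(p)$ only gives $\|J(p)\|_2\le\max_i p_i$, which is vacuous for peaked rows, and a naive operator-norm estimate gives $1$. Routing through Popoviciu's inequality together with the elementary step $(M-m)^2\le 2\|v\|_2^2$ — valid because the extremes of $v$ over the support sit at two different coordinates — is what makes the constant tight; everything else is routine bookkeeping.
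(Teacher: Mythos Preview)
Your proof is correct and follows the same covariance/variance route as the paper: both identify $v^\top J(p)\,v=\mathrm{Var}_p(v)$ to get positive-semidefiniteness and then argue that the worst case is a two-way split of mass. The paper's proof is a one-line sketch that simply asserts the extremum; you supply the missing step via Popoviciu's inequality together with $(M-m)^2\le 2\bigl(v_{i^\star}^2+v_{j^\star}^2\bigr)\le 2\|v\|_2^2$, which is precisely what makes the constant $\tfrac12$ rigorous.
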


\begin{proof}
$J(p)$ is a covariance matrix of a categorical distribution with probabilities $p$, hence PSD with operator norm bounded by the largest variance along any direction in the probability simplex. The extremum occurs when mass is split between two coordinates, giving $\|J(p)\|_2=\tfrac12$.
\end{proof}

\begin{lemma}[Closure of local first-order coefficients]
\label{lem:eff-closure}
Let $\mathcal{R}_\ell$ be a finite index set and suppose that, for each
$r\in\mathcal{R}_\ell$,
\[
\|\Delta_{\ell,r}\|
\le
\epsilon_{\mathrm{mach}}\,b_{\ell,r}
+ O(\epsilon_{\mathrm{mach}}^2).
\]
Define
\[
\Delta^{\mathrm{eff}}_\ell := \sum_{r\in\mathcal{R}_\ell}\Delta_{\ell,r},
\qquad
M^{\mathrm{eff}}_\ell := \sum_{r\in\mathcal{R}_\ell} b_{\ell,r}.
\]
Then
\[
\|\Delta^{\mathrm{eff}}_\ell\|
\le
\epsilon_{\mathrm{mach}}\,M^{\mathrm{eff}}_\ell
+ O(\epsilon_{\mathrm{mach}}^2).
\]
\end{lemma}

\begin{proof}
By triangle inequality,
\[
\|\Delta^{\mathrm{eff}}_\ell\|
\le
\sum_{r\in\mathcal{R}_\ell}
\|\Delta_{\ell,r}\|.
\]
Applying the assumed bound termwise gives
\[
\|\Delta^{\mathrm{eff}}_\ell\|
\le
\epsilon_{\mathrm{mach}}
\sum_{r\in\mathcal{R}_\ell} b_{\ell,r}
+
\sum_{r\in\mathcal{R}_\ell} O(\epsilon_{\mathrm{mach}}^2).
\]
Because $\mathcal{R}_\ell$ is finite, the last sum remains
$O(\epsilon_{\mathrm{mach}}^2)$, yielding the claim.
\end{proof}

\subsection{Proof of Theorem~\ref{thm:attn} (Self-Attention Forward Error)}
\label{app:attn-proof}

\begin{proof}
\phantom{.}
\begin{enumerate}
  \item \textbf{Pipeline decomposition.}\\
  The self-attention pipeline is $S=\tfrac{1}{\sqrt d}QK^\top$, $P=\mathrm{softmax}(S)$ (row-wise), and $A=PV$.
  First-order floating-point (FP) rounding is injected at each stage and then propagated; we bound each stage and compose.

  \item \textbf{Stage 1 — score computation (GEMM + scaling).}\\
  Let $\widetilde{S}=\mathrm{fl}(QK^\top/\sqrt d)$ and write
  $\Delta S:=\widetilde{S}-S$. By Lemma~\ref{lem:score-bound},
  \[
  \|\Delta S\|
  \le
  \epsilon_{\mathrm{mach}}
  \left(
  \|S\|
  +
  c_{\mathrm{gemm}}\frac{\|Q\|\,\|K\|}{\sqrt d}
  \right)
  +O(\epsilon_{\mathrm{mach}}^2).
  \]

  \item \textbf{Stage 2 --- softmax (Fr\'echet sensitivity + in-kernel rounding).}\\
  Let $\mathcal{S}$ denote the row-wise softmax map. Since $\mathcal{S}$ is smooth,
  \[
  \widetilde{P}
  = \mathcal{S}(S+\Delta S) + E_{\mathrm{smx}}
  = P + D\mathcal{S}_S[\Delta S] + E_{\mathrm{smx}} + O(\|\Delta S\|^2),
  \]
  where $E_{\mathrm{smx}}$ is the kernel-rounding term from
  Lemma~\ref{lem:softmax-kernel} and satisfies
  \[
  \frac{\|E_{\mathrm{smx}}\|}{\|P\|}
  \le c_{\mathrm{smx}}\epsilon_{\mathrm{mach}} + O(\epsilon_{\mathrm{mach}}^2).
  \]
  Because $D\mathcal{S}_S$ is block diagonal across rows,
  \[
  \|D\mathcal{S}_S\|_{F\to F}
  = \max_{1 \le i \le n}\|J(P_{i:})\|_2.
  \]
  By Lemma~\ref{lem:softmax-jac}, this quantity is always finite and at most
  $\tfrac12$.
  Therefore
  \[
  \frac{\|\widetilde{P}-P\|}{\|P\|}
  \le
  \frac{\|D\mathcal{S}_S\|_{F\to F}\,\|\Delta S\|}{\|P\|}
  + c_{\mathrm{smx}}\epsilon_{\mathrm{mach}}
  + O(\epsilon_{\mathrm{mach}}^2)
  \]
  \[
  \le
  \Bigl[
  c_{\mathrm{smx}}
  +
  \kappa_{\mathrm{softmax}}
  +
  c_{\mathrm{gemm}}\chi_{\mathrm{score}}
  \Bigr]\epsilon_{\mathrm{mach}}
  + O(\epsilon_{\mathrm{mach}}^2).
  \]
  \item \textbf{Stage 3 — value projection (GEMM).}\\
  Let $\widehat{A}:=\widetilde{P}V$. By Lemma~\ref{lem:gemm-out}, the finite-precision
  GEMM output satisfies
  \[
  \widetilde{A}
  =
  \widehat{A}+E_A
  =
  \widetilde{P}V+E_A,
  \]
  with
  \[
  \|E_A\|
  \le
  c'_{\mathrm{gemm}}\epsilon_{\mathrm{mach}}\|\widetilde{P}\|\,\|V\|_2
  + O(\epsilon_{\mathrm{mach}}^2).
  \]
  Therefore
  \[
  \widetilde{A}-A
  =
  (\widetilde{P}-P)V + E_A.
  \]
  By Stage~2, $\|\widetilde{P}-P\|/\|P\|=O(\epsilon_{\mathrm{mach}})$, hence
  \[
  \|\widetilde{P}\|
  \le
  \|P\|+\|\widetilde{P}-P\|
  =
  \|P\|+O(\epsilon_{\mathrm{mach}})\|P\|.
  \]
  Substituting this into the bound for $E_A$ gives
  \[
  \|E_A\|
  \le
  c'_{\mathrm{gemm}}\epsilon_{\mathrm{mach}}\|P\|\,\|V\|_2
  + O(\epsilon_{\mathrm{mach}}^2).
  \]
  Hence
  \[
  \|\widetilde{A}-A\|
  \le
  \Bigl(
  \|\widetilde{P}-P\|
  +
  c'_{\mathrm{gemm}}\epsilon_{\mathrm{mach}}\|P\|
  \Bigr)\|V\|_2
  + O(\epsilon_{\mathrm{mach}}^2).
  \]
  Dividing by $\|A\|=\|PV\|>0$ and multiplying/dividing the first term by
  $\|P\|$ yields
  \[
  \frac{\|\widetilde{A}-A\|}{\|A\|}
  \le
  \left(\frac{\|\widetilde{P}-P\|}{\|P\|}
  + c'_{\mathrm{gemm}}\epsilon_{\mathrm{mach}}\right)
  \frac{\|P\|\,\|V\|_2}{\|A\|}
  \]
  \[
  + O(\epsilon_{\mathrm{mach}}^2)
  =
  \left(\frac{\|\widetilde{P}-P\|}{\|P\|}
  + c'_{\mathrm{gemm}}\epsilon_{\mathrm{mach}}\right)\kappa_{\mathrm{val}}
  \]
  \[
  + O(\epsilon_{\mathrm{mach}}^2).
  \]

  \item \textbf{Final combination.}\\
  Substitute the Stage~2 bound into Stage~3:
    \[
    \begin{split}
    \|\widetilde{A}-A\|
    \;\le\;& \Big[
      c_{\mathrm{smx}}
      +
      \kappa_{\mathrm{softmax}}
      +
      c_{\mathrm{gemm}}\chi_{\mathrm{score}}
      \\
    & + c'_{\mathrm{gemm}}
    \Big]\epsilon_{\mathrm{mach}}\;\|P\|\,\|V\|_2
    + O(\epsilon_{\mathrm{mach}}^2).
    \end{split}
    \]
  If additionally $\|A\|>0$, dividing by $\|A\|$ gives
    \[
    \begin{split}
    \frac{\|\widetilde{A}-A\|}{\|A\|}
    \;\le\;& \Big[
      c_{\mathrm{smx}}
      +
      \kappa_{\mathrm{softmax}}
      +
      c_{\mathrm{gemm}}\chi_{\mathrm{score}}
      \\
    & + c'_{\mathrm{gemm}}
    \Big]\epsilon_{\mathrm{mach}}\;\kappa_{\mathrm{val}}
    + O(\epsilon_{\mathrm{mach}}^2).
    \end{split}
    \]
  This is exactly the main-text statement of Theorem~\ref{thm:attn}.
\end{enumerate}
\end{proof}

\subsection{Proof of Theorem~\ref{thm:residual} and Corollary~\ref{cor:depth}}
\label{app:residual-proof}

\begin{proof}
\leavevmode
\begin{enumerate}
  \item[(1)] \textbf{Theorem~\ref{thm:residual}.} Let $F:=J_f$ with $\|F\|_2<1$ and $T:=I+F$.
  \begin{enumerate}
    \item[(a)] \emph{Forward norm.} By subadditivity, $\|T\|_2=\|I+F\|_2\le \|I\|_2+\|F\|_2=1+\|F\|_2$.
    \item[(b)] \emph{Inverse norm.} Since $\|F\|_2<1$, $T$ is invertible and
    $(I+F)^{-1}=\sum_{k=0}^{\infty}(-F)^k$; thus $\|T^{-1}\|_2\le \sum_{k=0}^{\infty}\|F\|_2^k=(1-\|F\|_2)^{-1}$.
    \item[(c)] \emph{Combine.} Therefore
    \[
    \kappa(T)=\|T\|_2\,\|T^{-1}\|_2 \;\le\; \frac{1+\|F\|_2}{1-\|F\|_2}.
    \]
  \end{enumerate}

  \item[(2)] \textbf{Corollary~\ref{cor:depth} (depth-wise relaxation).}
  Consider a stack of residual blocks with Jacobians $\{F_\ell:=J_{f_\ell}\}_{\ell=1}^L$ and $\rho_\ell:=\|F_\ell\|_2<1$.
  Let $T_\ell:=I+F_\ell$. For the linearized composition $T:=\prod_{\ell=1}^L T_\ell$ we have
  \begin{enumerate}
    \item[(a)] \emph{Submultiplicativity of $\kappa$.}
    For any compatible $A,B$, $\kappa(AB)\le \kappa(A)\,\kappa(B)$ since
    $\|AB\|_2\le \|A\|_2\|B\|_2$ and $\|(AB)^{-1}\|_2=\|B^{-1}A^{-1}\|_2\le \|B^{-1}\|_2\|A^{-1}\|_2$.
    \item[(b)] \emph{Product bound.} Applying (a) iteratively and Theorem~\ref{thm:residual} to each $T_\ell$,
    \[
    \kappa\!\Big(\prod_{\ell=1}^L T_\ell\Big)
    \;\le\; \prod_{\ell=1}^L \kappa(T_\ell)
    \;\le\; \prod_{\ell=1}^L \frac{1+\rho_\ell}{1-\rho_\ell}.
    \]
    \item[(c)] \emph{First-order relaxation (used in main text).}
    Since $\frac{1+\rho}{1-\rho}=1+2\rho+O(\rho^2)$ for $\rho\in[0,1)$, a first-order \emph{relaxation model}
    replaces $\frac{1+\rho_\ell}{1-\rho_\ell}$ by $(1+\rho_\ell)$. This captures the attenuation effect of residuals
    while avoiding an overly pessimistic multiplicative growth; note this replacement is an approximation (not an upper bound).
  \end{enumerate}
\end{enumerate}
\end{proof}

\subsection{Proof of Proposition~\ref{prop:ln} (Normalization-path forward error and monotonicity)}
\label{app:ln-proof}

\begin{proof}
\phantom{.}
\begin{enumerate}
  \item \textbf{Setup and notation.}
  Let $d=d_{\text{model}}$, $m=\mu(x)=\tfrac{1}{d}\mathbf{1}^\top x$,
  $v=\sigma^2(x)=\tfrac{1}{d}\|x-m\mathbf{1}\|_2^2$,
  $C:=I-\tfrac{1}{d}\mathbf{1}\mathbf{1}^\top$, and
  \[
  c:=x-m\mathbf{1}=Cx,\qquad
  \alpha:=(v+\varepsilon)^{-1/2},\qquad
  g(x):=\frac{x-m\mathbf{1}}{\sqrt{v+\varepsilon}}=\alpha c,
  \qquad
  Z^{\mathrm{LN}}_\varepsilon(x):=\mathrm{Diag}(\gamma)\,g(x).
  \]
  The full LayerNorm output is $\mathrm{LN}(x)=Z^{\mathrm{LN}}_\varepsilon(x)+\beta$,
  but Proposition~\ref{prop:ln} isolates the $\varepsilon$-dependent normalization
  path $Z^{\mathrm{LN}}_\varepsilon$. The final bias addition is
  $\varepsilon$-independent and is absorbed into the effective remainder term in
  Theorem~\ref{thm:unified}.

  \item \textbf{Centering stage.}
  Let $\widetilde{m}=\mathrm{fl}(\tfrac{1}{d}\mathbf{1}^\top x)=m+\delta_m$.
  Standard first-order reduction bounds give
  \[
  |\delta_m|
  \le
  c_m\epsilon_{\mathrm{mach}}\frac{\|x\|_2}{\sqrt d}
  + O(\epsilon_{\mathrm{mach}}^2)
  \]
  for a kernel- and dimension-dependent constant $c_m>0$. Let
  \[
  \widetilde{c}:=\mathrm{fl}(x-\widetilde{m}\mathbf{1})=c+e_c.
  \]
  The subtraction stage satisfies
  \[
  e_c=-\delta_m\mathbf{1}+e_{\mathrm{sub}},
  \]
  with
  \[
  \|e_{\mathrm{sub}}\|_2
  \le
  c_{\mathrm{sub}}\epsilon_{\mathrm{mach}}
  \bigl(\|x\|_2+\sqrt d\,|\widetilde{m}|\bigr)
  + O(\epsilon_{\mathrm{mach}}^2).
  \]
  Since $|m|\le \|x\|_2/\sqrt d$ and
  $|\widetilde{m}|=|m|+O(\epsilon_{\mathrm{mach}})\|x\|_2/\sqrt d$, there exists
  $c_c>0$ such that
  \[
  \|e_c\|_2
  \le
  c_c\epsilon_{\mathrm{mach}}\|x\|_2
  + O(\epsilon_{\mathrm{mach}}^2).
  \]

  \item \textbf{Variance and reciprocal-square-root stage.}
  Let
  \[
  \widetilde{v}:=\mathrm{fl}\!\left(\frac{1}{d}\|\widetilde{c}\|_2^2\right)
  = v+\delta_v.
  \]
  Decompose
  \[
  \delta_v
  =
  \frac{\|\widetilde{c}\|_2^2-\|c\|_2^2}{d}
  + e_v,
  \]
  where $e_v$ collects first-order reduction/multiplication rounding in the
  squared-norm computation. Using $\widetilde{c}=c+e_c$ and
  $\|c\|_2=\sqrt d\,\sigma(x)$,
  \[
  \left|
  \frac{\|\widetilde{c}\|_2^2-\|c\|_2^2}{d}
  \right|
  \le
  \frac{2}{d}\|c\|_2\|e_c\|_2
  + O(\epsilon_{\mathrm{mach}}^2)
  \le
  \frac{2c_c}{\sqrt d}\epsilon_{\mathrm{mach}}\sigma(x)\|x\|_2
  + O(\epsilon_{\mathrm{mach}}^2).
  \]
  Standard first-order reduction bounds also give
  \[
  |e_v|
  \le
  c_v\epsilon_{\mathrm{mach}}\,v
  + O(\epsilon_{\mathrm{mach}}^2)
  \]
  for some $c_v>0$. Therefore there exist kernel- and dimension-dependent
  constants $\bar a_v,\bar b_v>0$ such that
  \[
  |\delta_v|
  \le
  \epsilon_{\mathrm{mach}}
  \bigl(\bar a_v\,\sigma(x)\|x\|_2+\bar b_v\,v\bigr)
  + O(\epsilon_{\mathrm{mach}}^2).
  \]
  Let
  \[
  \widetilde{\alpha}
  :=\mathrm{fl}\!\bigl((\widetilde{v}+\varepsilon)^{-1/2}\bigr)
  =\alpha+\delta_\alpha.
  \]
  By the mean value theorem for $t\mapsto t^{-1/2}$ and first-order
  reciprocal-square-root rounding,
  \[
  |\delta_\alpha|
  \le
  \frac{|\delta_v|}{2(v+\varepsilon)^{3/2}}
  +
  c_{\mathrm{rsqrt}}\epsilon_{\mathrm{mach}}\frac{1}{\sqrt{v+\varepsilon}}
  + O(\epsilon_{\mathrm{mach}}^2).
  \]

  \item \textbf{Normalization core.}
  The computed normalized core is
  \[
  \widetilde{g}(x)
  =
  \mathrm{fl}(\widetilde{\alpha}\widetilde{c})
  =
  \alpha c+\alpha e_c+\delta_\alpha c+e_{\mathrm{mul}}
  + O(\epsilon_{\mathrm{mach}}^2),
  \]
  where pointwise multiplication gives
  \[
  \|e_{\mathrm{mul}}\|_2
  \le
  c_{\mathrm{mul}}\epsilon_{\mathrm{mach}}
  \frac{\|x\|_2}{\sqrt{v+\varepsilon}}
  + O(\epsilon_{\mathrm{mach}}^2).
  \]
  Hence
  \[
  \|\widetilde{g}(x)-g(x)\|_2
  \le
  \frac{\|e_c\|_2}{\sqrt{v+\varepsilon}}
  +
  \|c\|_2\,|\delta_\alpha|
  +
  c_{\mathrm{mul}}\epsilon_{\mathrm{mach}}
  \frac{\|x\|_2}{\sqrt{v+\varepsilon}}
  + O(\epsilon_{\mathrm{mach}}^2).
  \]
  Substituting the bounds above yields a linear combination of
  \[
  \frac{\|x\|_2}{\sqrt{v+\varepsilon}},
  \qquad
  \frac{v\,\|x\|_2}{(v+\varepsilon)^{3/2}},
  \]
  with kernel- and dimension-dependent coefficients. Here we used
  \[
  \|c\|_2=\sqrt d\,\sigma(x)\le \|x\|_2,
  \qquad
  \|c\|_2\sigma(x)=\sqrt d\,v,
  \]
  so the $\delta_v$ terms are also controlled by the second quantity above
  after absorbing dimension-only factors into the constants. Since
  \[
  \frac{1}{\sqrt{v+\varepsilon}}
  \le
  \frac{\varepsilon+2v}{(v+\varepsilon)^{3/2}},
  \qquad
  \frac{v}{(v+\varepsilon)^{3/2}}
  \le
  \frac{\varepsilon+2v}{(v+\varepsilon)^{3/2}},
  \]
  there exists a kernel- and dimension-dependent constant
  $\bar a_{\mathrm{ln}}>0$ such that
  \[
  \|\widetilde{g}(x)-g(x)\|_2
  \le
  \epsilon_{\mathrm{mach}}\,
  \bar a_{\mathrm{ln}}
  \frac{\varepsilon+2v}{(v+\varepsilon)^{3/2}}
  \|x\|_2
  + O(\epsilon_{\mathrm{mach}}^2).
  \]

  \item \textbf{Finite-precision scaling by \texorpdfstring{$\gamma$}{gamma}.}
  Let $\widetilde{Z}^{\mathrm{LN}}_\varepsilon(x)$ denote the finite-precision
  result of the pointwise multiplication by $\gamma$. Standard pointwise
  multiplication bounds give
  \[
  \widetilde{Z}^{\mathrm{LN}}_\varepsilon(x)-Z^{\mathrm{LN}}_\varepsilon(x)
  =
  \mathrm{Diag}(\gamma)\bigl(\widetilde{g}(x)-g(x)\bigr)
  + e_\gamma
  + O(\epsilon_{\mathrm{mach}}^2),
  \]
  where
  \[
  \|e_\gamma\|_2
  \le
  c_\gamma\,\epsilon_{\mathrm{mach}}\,
  \|\mathrm{Diag}(\gamma)\|_2\,\|g(x)\|_2
  + O(\epsilon_{\mathrm{mach}}^2)
  \]
  for a kernel-dependent constant $c_\gamma>0$. Since
  \[
  \|g(x)\|_2
  =
  \frac{\|c\|_2}{\sqrt{v+\varepsilon}}
  \le
  \frac{\varepsilon+2v}{(v+\varepsilon)^{3/2}}
  \|x\|_2,
  \]
  so
  \[
  \|\widetilde{Z}^{\mathrm{LN}}_\varepsilon(x)-Z^{\mathrm{LN}}_\varepsilon(x)\|_2
  \le
  \epsilon_{\mathrm{mach}}\,
  \|\mathrm{Diag}(\gamma)\|_2\,
  \frac{\varepsilon+2v}{(v+\varepsilon)^{3/2}}
  \bigl(\bar a_{\mathrm{ln}}+c_\gamma\bigr)\|x\|_2
  + O(\epsilon_{\mathrm{mach}}^2).
  \]
  Define
  \[
  a_{\mathrm{ln}}:=\bar a_{\mathrm{ln}}+c_\gamma.
  \]

  \item \textbf{Absolute forward error bound.}
  With $a_{\mathrm{ln}}:=\bar a_{\mathrm{ln}}+c_\gamma$, Step 5 gives
  \[
  \|\widetilde{Z}^{\mathrm{LN}}_\varepsilon(x)-Z^{\mathrm{LN}}_\varepsilon(x)\|
  \le
  \epsilon_{\mathrm{mach}}
  \|\mathrm{Diag}(\gamma)\|_2
  \frac{\varepsilon+2\sigma^2(x)}{(\sigma^2(x)+\varepsilon)^{3/2}}
  \bigl(a_{\mathrm{ln}}\|x\|_2\bigr)
  + O(\epsilon_{\mathrm{mach}}^2),
  \]
  which is exactly \eqref{eq:ln-abs-bound} with
  \[
  M_{\mathrm{LN}}(x,\varepsilon)
  :=
  \|\mathrm{Diag}(\gamma)\|_2
  \frac{\varepsilon+2\sigma^2(x)}{(\sigma^2(x)+\varepsilon)^{3/2}}
  \bigl(a_{\mathrm{ln}}\|x\|_2\bigr).
  \]

  \item \textbf{Relative forward error bound.}
  If additionally $\|Z^{\mathrm{LN}}_\varepsilon(x)\|>0$, dividing the previous
  inequality by $\|Z^{\mathrm{LN}}_\varepsilon(x)\|$ gives
  \[
  \frac{\|\widetilde{Z}^{\mathrm{LN}}_\varepsilon(x)-Z^{\mathrm{LN}}_\varepsilon(x)\|}{\|Z^{\mathrm{LN}}_\varepsilon(x)\|}
  \le
  \epsilon_{\mathrm{mach}}
  \frac{M_{\mathrm{LN}}(x,\varepsilon)}{\|Z^{\mathrm{LN}}_\varepsilon(x)\|}
  + O(\epsilon_{\mathrm{mach}}^2),
  \]
  which is exactly the stated coefficient $C_{\mathrm{LN}}(x,\varepsilon)$.

  \item \textbf{Monotonicity of the $\varepsilon$-dependent factor.}
  For fixed $v=\sigma^2(x)$, define
  \[
  f_v(\varepsilon)
  :=
  \frac{\varepsilon+2v}{(v+\varepsilon)^{3/2}}.
  \]
  Then
  \[
  \frac{d}{d\varepsilon}f_v(\varepsilon)
  =
  -\frac{\varepsilon+4v}{2(v+\varepsilon)^{5/2}}
  < 0
  \qquad (\varepsilon>0),
  \]
  so $f_v$ is strictly decreasing in $\varepsilon$. This proves the monotonicity claim in Proposition~\ref{prop:ln}.
\end{enumerate}
\end{proof}

\subsection{Proof of Proposition~\ref{prop:bgss} (Frozen-scale monotonicity)}
\label{app:bgss-proof}

\begin{proof}
\phantom{.}
\begin{enumerate}
  \item \textbf{Derivative of the frozen-scale LayerNorm magnitude.}\\
  Write
  \[
  \alpha_{\ell,t}
  :=
  \|\mathrm{Diag}(\gamma_\ell)\|_2
  \bigl(a_{\ell,t}\,\|x_{\ell,t}\|_2\bigr),
  \]
  so that
  \[
  M_{\mathrm{LN},\ell,t}^{\mathrm{frz}}(\varepsilon)
  =
  \alpha_{\ell,t}
  \frac{\varepsilon+2v_{\ell,t}}{(v_{\ell,t}+\varepsilon)^{3/2}}.
  \]
  Because $a_{\ell,t}>0$, $\|\mathrm{Diag}(\gamma_\ell)\|_2\ge 0$, and
  $\|x_{\ell,t}\|_2\ge 0$, we have
  $\alpha_{\ell,t}\ge 0$. Differentiating gives
  \[
  \frac{d}{d\varepsilon}M_{\mathrm{LN},\ell,t}^{\mathrm{frz}}(\varepsilon)
  =
  -\frac{\alpha_{\ell,t}}{2}
  \frac{\varepsilon+4v_{\ell,t}}{(v_{\ell,t}+\varepsilon)^{5/2}},
  \]
  which is exactly \eqref{eq:bgss-derivative}. Since $\alpha_{\ell,t}\ge 0$,
  $v_{\ell,t}\ge 0$, and $\varepsilon>0$, the derivative is nonpositive, so
  $M_{\mathrm{LN},\ell,t}^{\mathrm{frz}}$ is nonincreasing. If
  $\|\mathrm{Diag}(\gamma_\ell)\|_2\,\|x_{\ell,t}\|_2>0$, then $\alpha_{\ell,t}>0$, hence the
  derivative is strictly negative for every $\varepsilon>0$, and
  $M_{\mathrm{LN},\ell,t}^{\mathrm{frz}}$ is strictly decreasing.

  \item \textbf{Monotonicity of the frozen-scale layer contribution.}\\
  Define the $\varepsilon$-independent factor
  \[
  D_{\ell,t}
  :=
  \frac{1}{\|X_{L,t}\|}
  \prod_{k=\ell+1}^{L}(1+\rho_{k,t}).
  \]
  Since $\|X_{L,t}\|>0$ and $\rho_{k,t}=\|J_{f_k}(X_{k-1,t})\|_2\ge 0$, we have
  $D_{\ell,t}\ge 0$. Therefore
  \[
  G_{\ell,t}^{\mathrm{frz}}(\varepsilon)
  =
  \Bigl(
  M^{\mathrm{eff}}_{\ell,t}
  +
  A_{\ell,t}
  +
  M_{\mathrm{LN},\ell,t}^{\mathrm{frz}}(\varepsilon)
  \Bigr)D_{\ell,t}.
  \]
  Because all remaining factors are fixed with respect to
  $\varepsilon$, the
  nonincreasingness of $M_{\mathrm{LN},\ell,t}^{\mathrm{frz}}$ implies that
  $G_{\ell,t}^{\mathrm{frz}}$ is nonincreasing in $\varepsilon$.

  \item \textbf{Exact reduction formula.}\\
  For any $\varepsilon'\ge \varepsilon$, subtracting the two frozen-scale scores yields
  \[
  \begin{split}
  G_{\ell,t}^{\mathrm{frz}}(\varepsilon) - G_{\ell,t}^{\mathrm{frz}}(\varepsilon')
  =\;&
  \frac{1}{\|X_{L,t}\|}
  \Bigl(
  M_{\mathrm{LN},\ell,t}^{\mathrm{frz}}(\varepsilon)
  -
  M_{\mathrm{LN},\ell,t}^{\mathrm{frz}}(\varepsilon')
  \Bigr)
  \\
  &\times
  \prod_{k=\ell+1}^{L}(1+\rho_{k,t}),
  \end{split}
  \]
  because the $M^{\mathrm{eff}}_{\ell,t}$ and $A_{\ell,t}$ terms cancel exactly. This is
  \eqref{eq:bgss-reduction}.
\end{enumerate}
\end{proof}

\subsection*{Appendix: $\varepsilon$-bump policy (as implemented)}
\begin{algorithmH}
\caption{LayerNorm $\varepsilon$-bump triggered by the practical regime indicator}
\label{alg:eps-bump}
\DontPrintSemicolon
\KwIn{target $\rho_*\in(0,1)$, check interval $t{=}5$, floor/cap $(\varepsilon_{\min}{=}10^{-6},\;\varepsilon_{\max}\in\{5\!\times\!10^{-3},10^{-2}\})$}
\For{training step $s=1,2,\dots$}{
  \If{$s \bmod t = 0$}{
    sample a 16-example minibatch; \ForEach{LN layer with current $\varepsilon$}{
      compute $\rho_{\mathrm{LN}}(\varepsilon)=\frac{\sigma^2(x)}{\varepsilon}\,d_{\text{model}}\,\epsilon_{\mathrm{mach}}$ \;
      \If{$\rho_{\mathrm{LN}}(\varepsilon)<1$}{
        $\varepsilon_{\mathrm{cand}}\gets
          \dfrac{\mathrm{median}(\sigma^2(x))\,d_{\text{model}}\,\epsilon_{\mathrm{mach}}}{\rho_*}$ \;
        $\varepsilon_{\mathrm{new}}\gets \mathrm{clip}(\varepsilon_{\mathrm{cand}},\varepsilon_{\min},\varepsilon_{\max})$ \;
        \If{$\varepsilon_{\mathrm{new}}>\varepsilon$}{ set $\varepsilon\leftarrow \varepsilon_{\mathrm{new}}$ \tcp*{monotone increase} }
      }
    }
  }
}
\end{algorithmH}

\subsection{Proof of Theorem~\ref{thm:unified} (Unified Forward Stability)}
\label{app:unified-proof}

\begin{proof}
\phantom{.}
\begin{enumerate}
  \item \textbf{Linearization and one-step recursion.}\\
  Let $X_\ell$ be the exact hidden after layer $\ell$, with $X_0$ the model input, and let $\widetilde{X}_\ell$ be its FP counterpart,
  and $E_\ell=\widetilde{X}_\ell-X_\ell$. For the pre-LN residual block $x\mapsto x+f_\ell(x)$,
  write $F_\ell:=J_{f_\ell}(X_{\ell-1})$ and $\rho_\ell=\|F_\ell\|_2$. With $E_0=0$, a first-order
  perturbation analysis gives, for each layer $\ell=1,\dots,L$,
  \[
  E_\ell=(I+F_\ell)E_{\ell-1}+\Delta_\ell+O(\epsilon_{\mathrm{mach}}^2),
  \]
  where $\Delta_\ell$ is the layer-$\ell$ local first-order forward error (from MHSA, LN, FFN).

  \item \textbf{Exact local layer coefficient.}\\
  Decompose the layer-local first-order error as
  \[
  \Delta_\ell
  =
  \Delta^{\mathrm{eff}}_\ell + \Delta^{\mathrm{attn}}_\ell + \Delta^{\mathrm{LN}}_\ell
  + O(\epsilon_{\mathrm{mach}}^2),
  \]
  where $\Delta^{\mathrm{eff}}_\ell$ collects the remaining non-attention and
  non-normalization-path
  first-order perturbations. By \eqref{eq:eff-module} and
  Lemma~\ref{lem:eff-closure},
  \[
  \|\Delta^{\mathrm{eff}}_\ell\|
  \le \epsilon_{\mathrm{mach}}\,M^{\mathrm{eff}}_\ell + O(\epsilon_{\mathrm{mach}}^2).
  \]
  Let
  \[
  M^{\mathrm{LN}}_\ell := M_{\mathrm{LN}}(x_\ell,\varepsilon_\ell)
  \]
  denote the exact absolute LayerNorm normalization-path magnitude from
  Proposition~\ref{prop:ln}. By Theorem~\ref{thm:attn},
  \[
  \|\Delta^{\mathrm{attn}}_\ell\|
  \le \epsilon_{\mathrm{mach}}\,A_\ell + O(\epsilon_{\mathrm{mach}}^2),
  \]
  and by Proposition~\ref{prop:ln},
  \[
  \|\Delta^{\mathrm{LN}}_\ell\|
  \le \epsilon_{\mathrm{mach}}\,M^{\mathrm{LN}}_\ell
  + O(\epsilon_{\mathrm{mach}}^2).
  \]
  Therefore triangle inequality gives
  \[
  \|\Delta_\ell\|
  \le
  \epsilon_{\mathrm{mach}}
  \Bigl[
  M^{\mathrm{eff}}_\ell
  + A_\ell
  + M^{\mathrm{LN}}_\ell
  \Bigr]
  + O(\epsilon_{\mathrm{mach}}^2)
  \]
  \[
  = \epsilon_{\mathrm{mach}} M_\ell + O(\epsilon_{\mathrm{mach}}^2).
  \]

  \item \textbf{Residual relaxation (downstream amplification).}\\
  Iterating the recursion from Step 1 and using $\|I+F_k\|_2\le 1+\rho_k$ yields
  \[
  \|E_L\|
  \;\le\;
  \sum_{\ell=1}^{L}
  \Bigl(\|\Delta_\ell\|\prod_{k=\ell+1}^{L}\|I+F_k\|_2\Bigr)
  \;+\; O(\epsilon_{\mathrm{mach}}^2)\,\|X_L\|
  \]
  \[
  \;\le\;
  \sum_{\ell=1}^{L}
  \Bigl(\|\Delta_\ell\|\prod_{k=\ell+1}^{L}(1+\rho_k)\Bigr)
  \;+\; O(\epsilon_{\mathrm{mach}}^2)\,\|X_L\|.
  \]

  \item \textbf{Normalization and collection of terms.}\\
  By Step 2, $\|\Delta_\ell\|\le \epsilon_{\mathrm{mach}} M_\ell + O(\epsilon_{\mathrm{mach}}^2)$.
  Any benign inter-layer norm ratios used to rewrite the local absolute
  magnitudes relative to $\|X_L\|$ are absorbed into the same layer-dependent
  first-order constants already defining $M_\ell$.
  Substitute this into Step 3 and divide by $\|X_L\|$:
  \[
  \frac{\|E_L\|}{\|X_L\|}
  \le
  \epsilon_{\mathrm{mach}}
  \sum_{\ell=1}^{L}
  \frac{M_\ell}{\|X_L\|}
  \prod_{k=\ell+1}^{L}(1+\rho_k)
  +
  O(\epsilon_{\mathrm{mach}}^2),
  \]
  which is exactly \eqref{eq:unified}.

  \item \textbf{Remainder.}\\
  The $O(\epsilon_{\mathrm{mach}}^2)$ term collects mixed higher-order interactions across kernels and layers.
\end{enumerate}
\end{proof}

\subsection{Estimation Details and Complexity Notes}
\label{app:estimation}

\paragraph{Softmax sensitivity.}
Estimate $\|D\mathcal{S}_S\|_{F\to F}=\max_i\|J(P_{i:})\|_2$ by $2$--$3$ steps of power iteration on sampled rows, then multiply by $\|S\|/\|P\|$ to obtain $\kappa_{\mathrm{softmax}}$. For multi-head, aggregate by max (or sum for a conservative predictor).

\paragraph{Score transport and value conditioning.}
Estimate $\chi_{\mathrm{score}}$ from the same sampled softmax differential together with the factor $\|Q\|\,\|K\|/(\|P\|\sqrt d)$. When $\|S\|>0$, one may equivalently form the auxiliary diagnostic $\kappa_{\mathrm{score}}$ and use $\chi_{\mathrm{score}}=\kappa_{\mathrm{softmax}}\kappa_{\mathrm{score}}$. For the value path in the layer-wise score, estimate the absolute attention magnitude factor $\|P\|\,\|V\|_2$ using the monitored tensors $\widehat{P}$ and $\widehat{V}$, for example via direct norm computation or a truncated-SVD estimate of $\|\widehat{V}\|_2$. When $\|A\|>0$, one may additionally record the relative diagnostic $\kappa_{\mathrm{val}}=\|P\|\,\|V\|_2/\|A\|$.

\paragraph{Residual transport.}
Estimate each downstream factor $\widehat{\rho}_{k,t}$ causally from the monitored
pass. Two practical options are: (i) $1$--$2$ power-iteration steps using JVP/VJP
access to estimate $\|J_{f_k}(X_{k-1,t})\|_2$, or (ii) a conservative blockwise
operator-norm surrogate built from the constituent linear maps and pointwise
nonlinearities inside the residual branch. Either choice preserves the role of
the downstream transport factor in $\widehat{G}_{\ell,t}$.

\paragraph{Effective remainder magnitude.}
Construct $\widehat{M}^{\mathrm{eff}}_{\ell,t}$ by aggregating the remaining
non-attention, non-normalization-path operations in layer $\ell$, such as output
projections, FFN maps, and the $\varepsilon$-independent LayerNorm tail. In
practice, we use causal norm-based surrogates of their first-order output
magnitudes and sum them so that $\widehat{M}^{\mathrm{eff}}_{\ell,t}$ preserves
the additive local decomposition of the exact magnitude $M^{\mathrm{eff}}_{\ell,t}$.

\paragraph{Kernel constants.}
$c_{\mathrm{gemm}},c'_{\mathrm{gemm}},c_{\mathrm{smx}},c_{\mathrm{ln}}$ are treated as fixed per hardware/runtime; they factor into the absolute constants in Theorems~\ref{thm:attn} and \ref{thm:unified}.

\begin{remark}[Dropout]\label{rem:dropout}
With inverted dropout $\mathrm{Drop}_p(z)=(M/p)\odot z$, 
$\|J_{\mathrm{Drop}_p\circ f}(x)\|_2\le (1/p)\|J_f(x)\|_2$; hence per-layer factors in our bounds
scale by at most $1/p$. Post-softmax attention dropout multiplies both $J$ and $P$ by $(1/p)$, 
leaving $\kappa_{\mathrm{softmax}}$ non-increasing under a fixed mask.
\end{remark}

\begin{lemma}[Attention dropout does not worsen $\kappa_{\mathrm{softmax}}$]
If post-softmax dropout is applied with a fixed mask, $\kappa'_{\mathrm{softmax}}\le \kappa_{\mathrm{softmax}}$.
\emph{Sketch.} $J(P')=\mathrm{Diag}(M/p)J(P)$ and $\|P'\|=\|(M/p)\odot P\|$ so the $(1/p)$ factors cancel in the ratio.
\end{lemma}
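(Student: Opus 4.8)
The plan is to reduce the stated inequality to two elementary norm comparisons, using that---conditional on a fixed mask, as in Remark~\ref{rem:dropout}---post-softmax dropout \(\mathrm{Drop}_p(z)=(M/p)\odot z\) with \(M\in\{0,1\}^{n}\) acts as a deterministic nonnegative diagonal map applied row-wise \emph{after} the softmax. First I would record the composed sensitivity: the Jacobian of \(S_{i:}\mapsto (M_{i:}/p)\odot\mathrm{softmax}(S_{i:})\) is \(\mathrm{Diag}(M_{i:}/p)\,J(P_{i:})\), and the post-dropout row is \(P'_{i:}=(M_{i:}/p)\odot P_{i:}\), so the post-dropout diagnostic is \(\kappa'_{\mathrm{softmax}}=\max_i \|\mathrm{Diag}(M_{i:}/p)\,J(P_{i:})\|_2\,\|S_{i:}\|\,/\,\|P'_{i:}\|\). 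Pulling the common scalar \(1/p\) out of \(\mathrm{Diag}(M_{i:}/p)\) and out of \(P'_{i:}=(1/p)(M_{i:}\odot P_{i:})\) cancels it exactly, so it remains to bound, for each row, the quantity \(\|\mathrm{Diag}(M_{i:})\,J(P_{i:})\|_2\,\|S_{i:}\|/\|M_{i:}\odot P_{i:}\|\) by \(\|J(P_{i:})\|_2\,\|S_{i:}\|/\|P_{i:}\|\); note that the score norm \(\|S_{i:}\|\) is untouched by dropout, hence drops out of the comparison, and the maximum over \(i\) preserves the inequality once the per-row version is in hand.

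For the numerator the argument is short: \(\mathrm{Diag}(M_{i:})\) is a coordinate projection, hence has operator norm at most one, and submultiplicativity gives \(\|\mathrm{Diag}(M_{i:})\,J(P_{i:})\|_2\le\|J(P_{i:})\|_2\). It is also worth recording the exact identity \(\mathrm{Diag}(M)\,J(P)=\mathrm{Diag}(M\odot P)-(M\odot P)P^{\top}\), which exhibits the masked Jacobian as a diagonal piece plus a rank-one piece, each carrying the factor \(M\odot P\); this is the form I would use to try to match the numerator's decay to that of the denominator, e.g.\ via \(\|\mathrm{Diag}(M)\,J(P)\|_2\le\|M\odot P\|_\infty+\|M\odot P\|_2\,\|P\|_2\) alongside the companion estimate \(\|J(P)\|_2\le\|P\|_\infty+\|P\|_2^{2}\) for the undropped row.

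The main obstacle is precisely the denominator bookkeeping: the ``numerator down, denominator down'' cancellation in the sketch is not automatic, because a mask that deletes a row's dominant coordinate shrinks \(\|M_{i:}\odot P_{i:}\|\) sharply while \(\|\mathrm{Diag}(M_{i:})\,J(P_{i:})\|_2\) only shrinks to the scale of the surviving mass, so the ratio could in principle grow. Closing the gap requires either (i) a careful estimate of the bilinear form \(\mathrm{Diag}(M)\,J(P)\) showing its operator norm inherits the same \(\|M\odot P\|\) scaling that appears in the denominator---this is the step I expect to be delicate---or (ii) restricting to masks that retain a constant fraction of each row's \(\ell_2\) mass, which is the regime controlled by the keep probability \(p\) and the one in which the diagnostic is actually used. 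I would present (ii) as the clean sufficient condition and pursue (i) for the unconditional statement; all remaining ingredients (the Jacobian identities, submultiplicativity, and passing to the max over rows) are routine.
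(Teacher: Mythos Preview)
Your reduction matches the paper's one-line sketch exactly: both show that the $(1/p)$ factor cancels between $\|\mathrm{Diag}(M/p)\,J(P)\|_2$ and $\|(M/p)\odot P\|$. The paper stops there; you correctly observe that this does not finish the argument, because after cancellation one must still compare $\|\mathrm{Diag}(M)\,J(P)\|_2/\|M\odot P\|$ against $\|J(P)\|_2/\|P\|$, and the binary mask can shrink the denominator faster than the numerator. Your instinct is right, and in fact approach~(i) cannot be completed in general: take $n=2$, $P=(1-\epsilon,\epsilon)$ with small $\epsilon>0$, and $M=(0,1)$. Then $J(P)=\epsilon(1-\epsilon)\bigl(\begin{smallmatrix}1&-1\\-1&1\end{smallmatrix}\bigr)$, so $\|J(P)\|_2/\|P\|\approx 2\epsilon$, whereas $\mathrm{Diag}(M)\,J(P)$ retains only the second row, giving spectral norm $\sqrt{2}\,\epsilon(1-\epsilon)$ against $\|M\odot P\|=\epsilon$ and hence a post-mask ratio $\sqrt{2}(1-\epsilon)\approx 1.4$. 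Since $\|S_{i:}\|$ is unchanged by dropout, this forces $\kappa'_{\mathrm{softmax}}\gg\kappa_{\mathrm{softmax}}$.

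So the lemma as stated is false without an additional hypothesis, and the paper's sketch is not merely terse but cannot be completed in the claimed generality. Your fallback~(ii)---requiring the mask to retain at least a fraction $c>0$ of each row's $\ell_2$ mass---is the honest sufficient condition: under it the denominator satisfies $\|M\odot P\|\ge c\,\|P\|$, and submultiplicativity on the numerator yields $\kappa'_{\mathrm{softmax}}\le c^{-1}\kappa_{\mathrm{softmax}}$. That weaker statement is what the argument actually supports; you have correctly diagnosed why the unconditional claim fails where the paper's sketch does not.
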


\subsection{Additional Experimental Details}
\label{app:exp-details}

\paragraph{Protocols and metrics.}
E1 consists of synthetic controlled sweeps with $20$ attention configurations,
$10$ LayerNorm $\varepsilon$ values, and $8$ residual-gain values. E2 uses the
GPT-2 checkpoint \citep{Radford2019language} on WikiText-103
\citep{Merity2016pointer} validation with BF16/FP16 monitored passes, sequence
lengths $\{128,512,1024\}$, and $3$ seeds, for $18$ completed runs and $96$
monitored windows per run. E3 reuses these E2 runs, selects the top-$8$
highest-mismatch windows per run, and evaluates single-layer FP32
reference-patch effects against the same manual forward path; the mean
recompute gap is
$9.24\times 10^{-7}$. E5 uses stable FP32-master / FP16-shadow training on
WikiText-2 train with sequence length $256$, $256$ monitored steps, $3$ seeds,
and a maximum of $24$ controller actions for the budget-matched policies.

\paragraph{Implementation, infrastructure, and assets.}
The released code is written for Python $3.10+$ and uses PyTorch, Hugging Face
\texttt{transformers}, Hugging Face \texttt{datasets}, \texttt{matplotlib}, and
a Bash-compatible orchestration script. The public repository includes default
configuration files specifying the model, dataset splits, precisions, sequence
lengths, seeds, monitoring parameters, and BGSS controller hyperparameters used
for the reported experiments. Experiments were run on NVIDIA H100 accelerators.
The external assets are the Hugging Face \texttt{gpt2} checkpoint and the
WikiText dataset accessed through Hugging Face \texttt{datasets}. The
\texttt{gpt2} checkpoint is distributed under the MIT license, and WikiText is
distributed under the Creative Commons Attribution-ShareAlike license
(CC BY-SA 4.0). The official code release is provided at the URL in
Section~\ref{sec:introduction} and is released under the MIT license.

\paragraph{Secondary quantitative summaries.}
For E2, the transport-aware combined predictor improves Pearson correlation over
the no-transport ablation by a mean of $0.163$ and Spearman correlation by a
mean of $0.139$. For E3, the mean top-$1$ hit rate is $0.264$, with mean top-$3$
and top-$5$ overlaps of $0.505$ and $0.622$, respectively. For E5, BGSS and the
budget-matched baselines use identical average protected layer-steps ($2742$).
Relative to the random same-budget controller, BGSS reduces mean onset events
by $1.0$ and reduces mean max mismatch by more than a factor of $2.7$; relative
to the risk-only same-budget controller, BGSS preserves the same mean onset
event count while reducing mean max mismatch from $5.71\times 10^{-3}$ to
$3.14\times 10^{-3}$.

\begin{figure}[H]
\centering
\begin{minipage}[t]{0.48\linewidth}
  \centering
  \includegraphics[width=\linewidth]{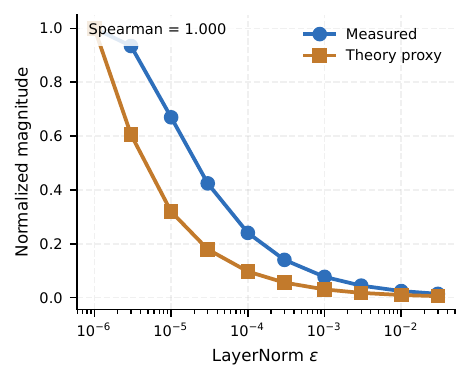}
\end{minipage}\hfill
\begin{minipage}[t]{0.48\linewidth}
  \centering
  \includegraphics[width=\linewidth]{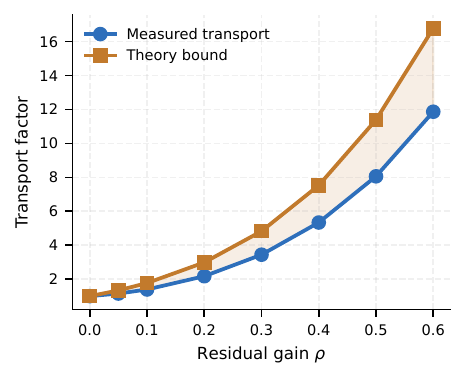}
\end{minipage}
\caption{Additional E1 controlled sweeps. Left: the measured LayerNorm
normalization-path change and the causal proxy both decrease monotonically as
$\varepsilon$ grows, matching Proposition~\ref{prop:ln}. Right: the measured
downstream amplification remains below the residual transport bound throughout
the tested $\rho$-range, matching Theorem~\ref{thm:residual} and
Corollary~\ref{cor:depth}.}
\label{fig:app:e1_pair}
\end{figure}

\begin{figure}[H]
\centering
\begin{minipage}[t]{0.45\linewidth}
  \centering
  \includegraphics[width=\linewidth]{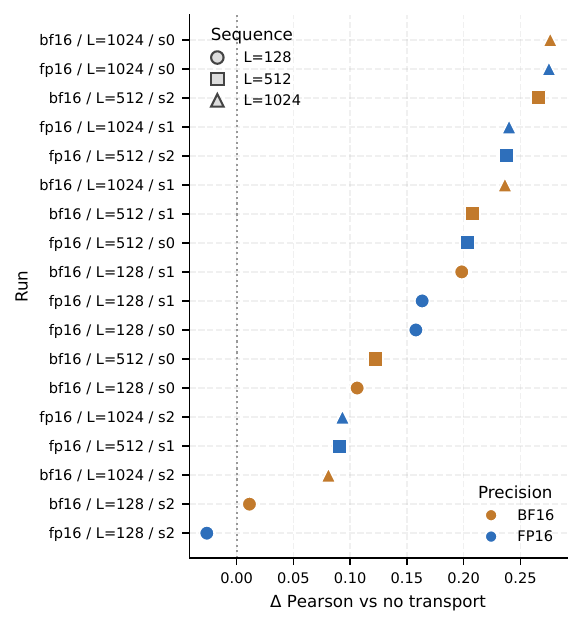}
\end{minipage}\hfill
\begin{minipage}[t]{0.45\linewidth}
  \centering
  \includegraphics[width=\linewidth]{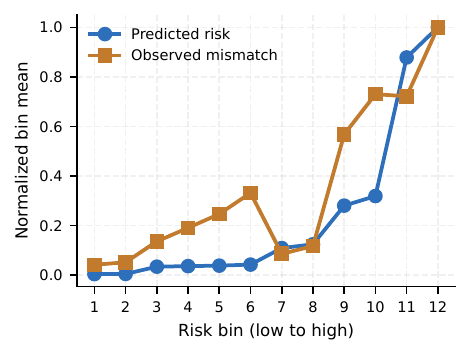}
\end{minipage}
\caption{Additional E2 views. Left: run-wise transport gain, showing that most
runs improve in Pearson correlation when downstream transport is added to the
local predictor. Right: a binned trend view in which windows sorted by the
predicted risk exhibit increasing mismatch in the high-risk bins.}
\label{fig:app:e2_pair}
\end{figure}

\begin{figure}[H]
\centering
\begin{minipage}[t]{0.48\linewidth}
  \centering
  \includegraphics[width=\linewidth]{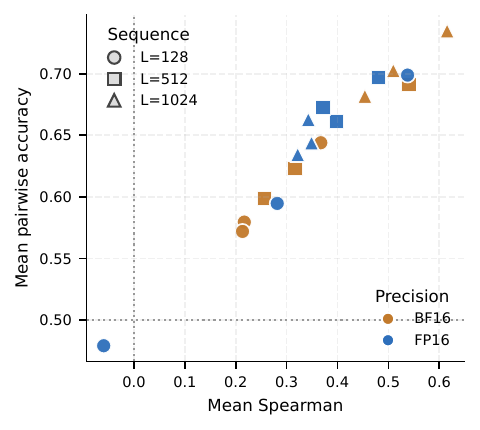}
\end{minipage}\hfill
\begin{minipage}[t]{0.48\linewidth}
  \centering
  \includegraphics[width=\linewidth]{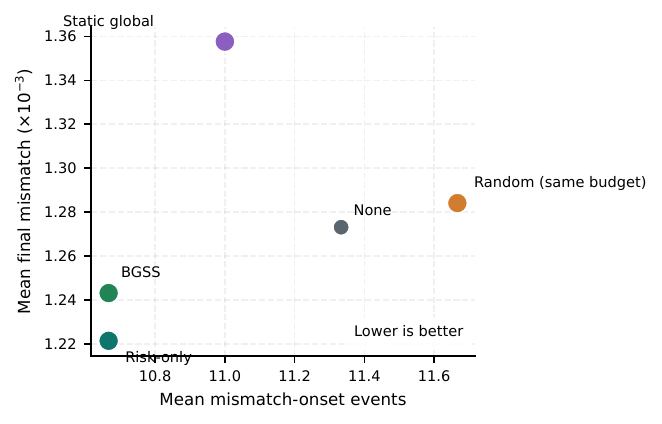}
\end{minipage}
\caption{Additional E3 and E5 summaries. Left: run-level E3 fidelity, showing
that the attribution proxy remains in the positive-fidelity regime across runs.
Right: a complementary E5 tradeoff view, showing that BGSS and the risk-only
same-budget controller attain similar mean onset-event counts, while BGSS
improves over the random same-budget controller on both onset events and mean
final mismatch.}
\label{fig:app:e35_pair}
\end{figure}

\end{document}